\def\tr{^\top}
\providecommand{\norm}[1]{\lVert#1\rVert}
\DeclareMathOperator{\sgn}{sgn}
\theoremstyle{plain}
\newtheorem{proposition}{Proposition}
\begin{document}

\title{AdaSTE: An Adaptive Straight-Through Estimator to Train Binary Neural Networks}

\author{%
  Huu Le, Rasmus Kj\ae{}r H\o{}ier, Che-Tsung Lin and Christopher Zach \\
  Chalmers University of Technology\\
  Gothenburg, Sweden \\
  \texttt{huul,hier,chetsung,zach@chalmers.se}
}

\maketitle

\begin{abstract}
  We propose a new algorithm for training deep neural networks (DNNs) with
  binary weights. In particular, we first cast the problem of training binary
  neural networks (BiNNs) as a bilevel optimization instance and subsequently
  construct flexible relaxations of this bilevel program.  The resulting
  training method shares its algorithmic simplicity with several existing
  approaches to train BiNNs, in particular with the straight-through gradient
  estimator successfully employed in BinaryConnect and subsequent methods. In
  fact, our proposed method can be interpreted as an adaptive variant of the
  original straight-through estimator that conditionally (but not always) acts
  like a linear mapping in the backward pass of error propagation.
  Experimental results demonstrate that our new algorithm offers favorable
  performance compared to existing approaches.
\end{abstract}

\section{Introduction}

\label{sec:intro}
Deploying deep neural networks (DNNs) to computing hardware such as mobile and
IoT devices with limited computational and storage resources is becoming increasingly
relevant in practice, and hence training methods especially dedicated to quantized DNNs have
emerged as important research topics in recent years~\cite{Chen2019}.  In this work, we are
particularly interested in the special case of DNNs with binary weights
limited to $\{+1,-1\}$, since in this setting the computations at inference time largely
reduce to sole additions and subtractions.
Very abstractly, the task of learning in such binary weight neural networks
(BiNNs) can be formulated as an optimization program with binary constraints
on the network paramters, i.e.,
\begin{align}
  & \min\nolimits_w \ell(w) \qquad \text{s.t. } w \in \{-1, 1\}^d, \label{eq:binary_opt} \\
  &= \min\nolimits_{w \in \{-1,1\}^d} \mathbb{E}_{(x,y)\sim p_{\text{data}}}\left[ \psi(f(x,w), y) \right],
\end{align}
where $d$ is the dimensionality of the underlying parameters (i.e.\ all
network weights), $p_{\text{data}}$ is the training distribution and
$\psi$ is the training loss (such as the cross-entropy or squared Euclidean
error loss). $f(x;w)$ is the prediction of the DNN with weights $w$ for
input~$x$.

\begin{figure*}[htb]
  \centering
  \includegraphics[width=0.99\textwidth]{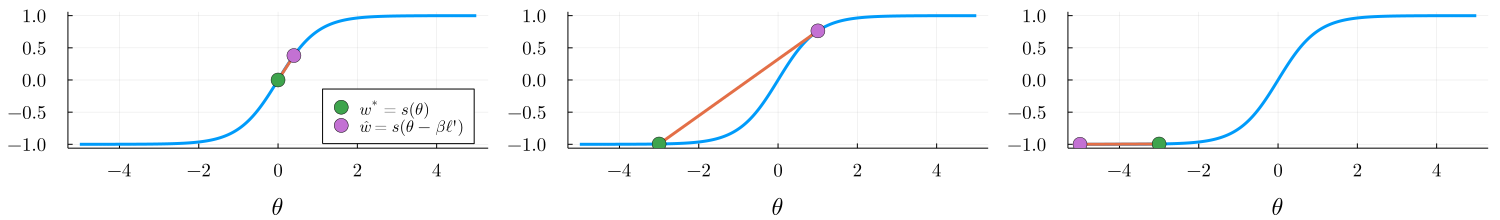}
  \caption{Adaptive straight-through estimation illustrated when $s$ is the
    $\tanh$ mapping. $\ell'$ is the incoming back-propagated error
    signal. Left: $\theta\approx 0$. The finite difference slope
    $(\hat w-w^*)/\beta$ matches the derivative of $\tanh$ very well. Middle:
    $\theta\ll 0$ and $\ell'<0$. A nearly vanishing derivative of
    $\tanh$ is boosted and $\tanh$ becomes ``leaky.''  Right:
    $\theta\ll 0$ and $\ell'>0$. No boosting of the gradient in this case. The
    case $\theta\gg 0$ is symmetrical.}
  \label{fig:teaser}
\end{figure*}

In practice, one needs to address problem settings where the parameter
dimension $d$ is very large (such as deep neural networks with many layers).
However, addressing the binary constraints in the above program is a
challenging task, which is due to the combinatorial and non-differentiable
nature of the underlying optimization problem.  In view of large training
datasets, (stochastic) gradient-based methods to obtain minimizers of
\eqref{eq:binary_opt} are highly preferable.
Various techniques have been proposed to address the above difficulties and
convert~\eqref{eq:binary_opt} into a differentiable surrogate.  The
general approach is to introduce real-valued ``latent'' weights
$\theta \in\mathbb{R}^d$, from which the effective weights
$w=\sgn(\theta)$ are generated via the sign function (or a differentiable
surrogate thereof).  One of the simplest and nevertheless highly successful
algorithms to train BiNNs termed BinaryConnect~\cite{Courbariaux2016} is based
on straight-through estimators (STE), which ignore the sign mapping entirely
when forming the gradient w.r.t.\ the latent weights $\theta$.  Although this
appears initially not justified, BinnaryConnect works surprisingly well and is
still a valid baseline method for comparison. More recently, the flexibility in
choosing the distance-like mapping leveraged in the mirror descent
method~\cite{nemirovskij1983problem} (and in particular the entropic descent
algorithm~\cite{beck2003mirror}) provides some justification of
BinaryConnect-like methods~\cite{Ajanthan2021} (see also Sec.~\ref{sec:mirror_descent}).

In this work, we propose a new framework for training binary neural networks.
In particular, we first formulate the training problem shown
in~\eqref{eq:binary_opt} as a bilevel optimization task, which is
subsequently relaxed using an optimal value reformulation. Further, we propose
a novel scheme to calculate meaningful gradient surrogates in order to update
the network parameters. The resulting method strongly resembles BinaryConnect
but leverages an adaptive variant of the straight-through gradient estimator:
the sign function is conditionally replaced by a suitable linear but
data-dependent mapping. Fig.~\ref{fig:teaser} illustrates the underlying
principle for the $\tanh$ mapping: depending on the incoming error signal,
vanishing gradients induced by $\tanh$ are conditionally replaced by
non-vanishing finite-difference surrogates. We finally point out that our
proposed method can be cast as a mirror descent method using a data-dependent
and varying distance-like mapping.

\section{Related Work}

The practical motivation for exploring weight quantization is to reduce  the
computational  costs  of  deploying  (and  in  some cases  training)  neural
networks. This can be particularly attractive in the case of edge computing and IoT devices~\cite{Chen2019}. Even when retaining floating point precision for activations $z$, using
binarized weights matrices $W$ means that the omnipresent product $Wz$ reduces to cheaper additions and subtractions of floating point values.

Already in the early 1990s,~\cite{Grossman1990,Venkatesh1993} trained BiNNs using fully local learning rules with layerwise targets computed via node perturbations.
In order to avoid the limited scalability of node perturbations,~\cite{Saad1990} instead employed a differentiable surrogate of the sign function for gradient computation.
Recently the use of differentiable surrogates in the backwards pass has been coined the Backward Pass Differentiable Approximation (BPDA) in the context of adversarial attacks~\cite{Athalye2018}. However, the same principle is at the core of many network quantization approaches, most notably the STE for gradient estimation.

Recent approaches have mainly focused on variations of the STE. A set of real valued (latent) weights are binarized when computing the forward pass, but during the backwards pass the identity mapping is used as its differentiable surrogate (which essentially makes the STE a special case of BPDA). The computed gradients are then used to update  the latent weights. The STE was presented by Hinton (and acredited to Krizhevsky) in a video lecture in 2012~\cite{Hinton2012}.
Subsequently it was employed for training networks with binary activations in~\cite{Bengio2013}, and to train networks with binary weights (and floating point activations) in the BinaryConnect (BC) model~\cite{Courbariaux2016}. BinaryConnect also used heuristics such as clipping the latent weights and employing Batch Normalization~\cite{Ioffe2015} (including its use at the output layer) to improve the performance of STE based training.
Further and recent analysis of the straight-through estimator is provided in~\cite{yin2018}, where its origin is traced back to early work on perceptrons Rosenblatt~\cite{Rosenblatt1957, Rosenblatt1962}.
The STE has also been applied to training fully binarized neural networks (e.g.~\cite{Hubara}). Moreover, Rastegari et al.~\cite{Rastegari2016} employ the STE for training fully binarized as well as mixed precision networks, and achieve improved performance by introducing layer and channel-wise scaling factors.
An interesting line of research has explored adapting the STE for variable bit-width quantization with learnable quantization step sizes in~\cite{esser2020} and learnable bit width in~\cite{uhlich2020}. 
\cite{uhlich2020} also introduces a regularization based method for constraining the learned bit-width to conform to a user-specified memory budget. 

Subsequent approaches have focused on deriving similar but less heuristic learning algorithms for networks with binary weights. ProxQuant (PQ)~\cite{Bai2019}, Proximal Mean-Field (PMF)~\cite{Ajanthan2019}, Mirror Descent (MD)~\cite{Ajanthan2021} and Rotated Binary Neural Networks (RBNN)~\cite{Lin2020} formulate the task of training DNNs with binary weights as a constrained optimization problem and propose different conversion functions used for moving between real-valued latent weights and binarized weights. A common feature among these methods is that they belong to the class of homotopy methods by gradually annealing the conversion mapping.
Qin et al~\cite{Qin2020a} introduce a novel technique for minimizing the information loss (caused by binarization) in the forward pass, and also aims to address gradient error by employing a gradually annealed tanh function as a differentiable surrogate during the backwards pass along with a carefully chosen gradient clipping schedule.
Similar to early research,~\cite{helwegen2019} does not introduce
latent real-valued weights, but rather updates the binary weights directly using a momentum based optimizer designed specifically for BiNNs. 
Several authors have approached the training of quantized neural networks via a variational approach~\cite{meng2020, soudry2014, Achterhold2018, louizos2018}.
Among those, BayesBiNN~\cite{meng2020} is particularly competitive: instead of optimizing over binary weights, the parameters of Bernoulli distributions are learned by employing both a Bayesian learning rule~\cite{khan2021} and the Gumbel-softmax trick~\cite{jang2016categorical,maddison2017concrete} (therefore requiring an inverse temperature parameter to convert the concrete distribution to a Bernoulli one).

For additional surveys of weight quantization we refer to the review papers~\cite{guo2018,Qin2020b} as well as section III of~\cite{Deng2020}. For a review of the efficacy of various ad-hoc techniques commonly employed for training BiNNs we refer to~\cite{alizadeh2018}.

\section{Background}

After clarifying some mathematical notations we summarize the mirror descent
method (and its use to train BiNNs) and the Prox-Quant approach
in order to better establish similarities and differences with our proposed
method later.

\subsection{Notation}

A constraint such as $w \in C$ is written as $\imath_C(w)$ in functional
form. We use $\odot$ to denote element-wise multiplication and
$\oslash$ for element-wise division.
The derivative of a function $\ell$ at $w$ is written as $\ell'(w)$. Many
mappings will be piece-wise differentiable but continuous. Therefore, in those cases
$\ell'(w)$ is a suitable element in the sub- or super-derivative.
We use an arrow over some variable names (especially $\vec\beta$) to emphasize
that this is a vector and not a scalar. For the same reason we use e.g.~$\vec s$ and $\vec\sgn$ to indicate the vectorized form of a scalar mapping $s$ (or $\sgn$) that is applied element-wise.

\subsection{Mirror Descent}
\label{sec:mirror_descent}

In short, mirror descent~\cite{nemirovskij1983problem,beck2003mirror}
successively generates new iterates by minimizing a regularized first-order
surrogate of the target objective. The most common quadratic regularizer
(which leads to the gradient descent method) is replaced by a more general
Bregmen divergence penalizing large deviations from the previous iterate. The
main motivation is to accelerate convergence of first-order methods, but it
can also yield very elegant methods such as the entropic descent algorithm,
where the utilized Bregman divergence based on the (negated) Shannon entropy
is identical to the KL divergence. The entropic descent method is very natural
when optimizing unknowns constrained to remain in the probability simplex
$\Delta$. The algorithm repeats updates of the form
\begin{align}
  w^{(t+1)} &\gets \arg\min_{w\in\Delta} w\tr\ell'(w^{(t)}) + \tfrac{1}{\eta} D_{KL}(w \| w^{(t)})
\end{align}
with the associated first-order optimality condition
\begin{align}
  w_j^{(t+1)} \propto w_j^{(t)} e^{-\eta \ell'(w^{(t)})_j}.
\end{align}
Reparametrizing $w$ as $w=\sigma(\theta)$, where $\sigma$ is the soft-arg-max
function, $\sigma(u)_j = e^{u_j}/\sum_{j'}e^{u_{j'}}$, yields
\begin{align}
  \theta^{(t+1)} &\gets \theta^{(t)} - \eta \ell'(w^{(t)}) = \theta^{(t)} - \eta\ell'(\sigma(\theta^{(t)})).
\end{align}
Interestingly, mirror descent modifies the chain rule by bypassing the inner
derivative, since the update is based on $\ell'(\sigma(\theta^{(t)}))$ and not
on $\frac{d}{d\theta} \ell(\sigma(\theta^{(t)}))$ as in regular gradient
descent. Hence, mirror descent is one way to justify the straight-through
estimator. The entropic descent algorithm is leveraged in~\cite{Ajanthan2021}
to train networks with binary (and also generally quantized) weights. The
soft-arg-max function $\sigma$ is slowly modified towards a hard arg-max
mapping in order to ultimately obtain strictly quantized weights.

\subsection{ProxQuant}

ProxQuant~\cite{Bai2019} is based on the observation that the straight-through
gradient estimator is linked to proximal operators via the dual averaging
method~\cite{xiao2010dual}. The proximal operator for a function $\phi$ is the
solution of the following least-squares regularized optimization problem,
\begin{align}
    \text{prox}_{\lambda\phi}(\theta) = \arg\min\nolimits_{\theta'} \lambda \phi(\theta') + \tfrac{1}{2} \|\theta' - \theta\|^2,
\end{align}
where $\lambda>0$ controls the regularization strength. If $\phi$ is a convex and
lower semi-continuous mapping, the minimizer of the r.h.s.\ is always unique
and $\text{prox}_{\lambda\phi}$ is a proper function (and plays an crucial role
in many convex optimization methods). ProxQuant uses a non-convex mappings for
$\phi$, which is far more uncommon for proximal steps than the convex case (see
e.g.~\cite{strekalovskiy2014real} for another example). In order to train DNNs
with binary weights, $\phi$ is chosen as W-shaped function,
\begin{align}
  \phi(\theta) = \sum\nolimits_{j=1}^d \min\left\{|\theta_j - 1|, |\theta_j + 1| \right\}.
\end{align}
$\phi$ has $2^d$ isolated global minima and is therefore not convex. Note that
$\text{prox}_{\lambda\phi}(\theta)$ is uniquely defined as long as all elements
in $\theta$ are non-zero. The network weights are updated according to
\begin{align}
  \theta^{(t+1)} \gets \text{prox}_{\lambda^{(t)} \phi} \left(\theta^{(t)} - \eta \ell'(\theta^{(t)}) \right),
\end{align}
and the regularization weight $\lambda^{(t)}$ is increased via an annealing
schedule, which makes ProxQuant an instance of homotopy methods: strictly
quantized weights are only obtained for a sufficiently large value of
$\lambda^{(t)}$.

\section{Adaptive Straight-Through Estimator}
In this section, we propose a new approach to tackle the optimization problem
given in~\eqref{eq:binary_opt}. Reformulating and relaxing an underlying
bilevel minimization problem is at the core of the proposed method.

\subsection{Bilevel Optimization Formulation}
\label{sec:bilevel}

We start by rewriting the original problem~\eqref{eq:binary_opt} as the
following bilevel minimization program,
\begin{align}
  \min\nolimits_{\theta, w} \ell(w^*) \quad \text{s.t. } w^* = \arg\min\nolimits_{w} \mathcal{E}(w;\theta)
  \label{eq:bilevel}
\end{align}
where $\mathcal E(w;\theta)$ can be any function that favors $w^*$ to be
binary. Two classical choices for $\mathcal E$ are given by
\begin{align}
  \mathcal E_{\text{tanh}}(w;\theta) &= -\tfrac{1}{\tau} \sum\nolimits_j H\left( \tfrac{1}{2}(1-w_j) \right) - w\tr\theta \\
  \mathcal E_{\text{hard-tanh}}(w;\theta) &= \tfrac{1}{2\tau} \norm{w}^2 - w\tr\theta + \imath_{[-1,1]^d}(w),
\end{align}
where $H$ is the Shannon entropy of a Bernoulli random variable,
$H(u) = u\log u+(1-u)\log(1-u)$. The minimizer $w^*$ for given
$\theta$ is the $\tanh$ mapping in the case of
$\mathcal E_{\text{tanh}}$, $w_j^*=\tanh(\theta_j/\tau)$, and the second
option yields the hard-tanh mapping,
$w_j^* = \Pi_{[-1,1]}(\theta_j/\tau)$. $\tau>0$ is a parameter steering how
well these mappings approximate the sign function $\vec\sgn(\theta)$.

In order to apply a gradient-based learning method we require that
$\mathcal E$ is differentiable w.r.t.\ $\theta$ for all $w$. In the above
examples we have
$\frac{\partial}{\partial\theta} \mathcal E(w;\theta)=-w$. It will be
sufficient for our purposes to assume that $\mathcal E$ is of the form
\begin{align}
  \mathcal E(w;\theta) = -w\tr \theta + \mathcal G(w)
  \label{eq:fenchel_E}
\end{align}
for a coercive function $\mathcal G$ bounded from below. That is, $w$ and
$\theta$ only interact via their (separable) inner product. Further, it is
sufficient to assume that $\mathcal G$ is fully separable,
$\mathcal G(w) = \sum_j G(w_j)$, since each latent weight $\theta_j$ can be
mapped to its binarized surrogate $w_j$ independently (an underlying assumption in the majority of works but explicitly deviated from in~\cite{han2020training}). Thus, the general form
for $\mathcal E$ assumed in the following is given by
\begin{align}
  \mathcal E(w;\theta) = \sum\nolimits_j \big( G(w_j) - w_j\theta_j \big).
  \label{eq:separable_E}
\end{align}
Therefore in this setting the solution $w^*=(w_1^*,\dotsc,w_d^*)\tr$ is given
element-wise,
\begin{align}
  w_j^* = \arg\min\nolimits_{w_j} G(w_j) - w_j\theta_j.
\end{align}

\subsection{Relaxing by Optimal Value Reformulation}

The optimal value reformulation (e.g.~\cite{outrata1988note, Zach2021}), which
is a commonly used reformulation approach in bilevel optimization, allows us
to rewrite the bilevel problem~\eqref{eq:bilevel} as follows,
\begin{align}
  \min\nolimits_{\theta, w} \ell(w) \quad \text{s.t. } \mathcal E(w;\theta) \le \min\nolimits_{w'} \mathcal E(w';\theta).
  \label{eq:bilevel_ovr}
\end{align}
Observe that the $w^*$ in the outer objective of~\eqref{eq:bilevel} was
replaced by a new unknown $w$, while the difficult equality constraint
in~\eqref{eq:bilevel} has been replaced by a somewhat easier inequality
constraint. Due to the separable nature of $\mathcal E$
in~\eqref{eq:separable_E}, it is advantageous to introduce an inequality
constraint for each element $w_j$. Thus, we obtain
\begin{align}
  \min\nolimits_{\theta, w} \ell(w) \quad \text{s.t. } E(w_j;\theta_j) \le \min\nolimits_{w_j'} E(w_j';\theta_j),
  \label{eq:separable_ovr}
\end{align}
where $E$ (independent of $j$) is given as
\begin{align}
  E(w_j;\theta_j) := G(w_j) - w_j\theta_j.
\end{align}
This first step enables us to straightforwardly
relax~\eqref{eq:separable_ovr} by fixing positive Lagrange multipliers for the
inequality constraints:
\begin{align}
  \min_{\theta, w} \ell(w) + \sum\nolimits_j \tfrac{1}{\beta_j} \big( E(w_j;\theta_j) - \min\nolimits_{w_j'} E(w_j';\theta_j) \big).
\end{align}
We parametrize the non-negative multipliers via $\beta_j^{-1}$ for
$\beta_j>0$, which will be convenient in the following. Since we are
interested in gradient-based methods, we replace the typically highly
non-convex ``loss'' $\ell$ (which subsumes the target loss and the mapping
induced by the network) by its linearization at $w^*$,
$\ell(w^*) + (w-w^*)\tr \ell'(w^*)$.
Recall that $w^* = \arg\min_w \mathcal E(w;\theta)$ is the effective weight
used in the DNN and is ideally close to $\vec\sgn(\theta)$. Overall, we arrive at
the following relaxed objective to train a network with binary weights:
\begin{align}
  \mathcal{L}(\theta) &= \ell(w^*) - (w^*)\tr \ell'(w^*)  \nonumber \\
  {} &+ \sum\nolimits_j \min_{w_j} \left\{ w_j \ell'_j(w^*) + \tfrac{1}{\beta_j} E(w_j; \theta_j) \right\} \nonumber \\
  {} &- \sum\nolimits_j \min_{w_j} \left\{ \tfrac{1}{\beta_j}E(w_j;\theta_j) \right\}
       \label{eq:contrastive_L},
\end{align}
The inner minimization problems have the solutions
\begin{align}
  w_j^* &= \arg\min\nolimits_{w_j} E(w_j;\theta_j) \qquad\text{and} \nonumber \\
  \hat w_j &:= \arg\min\nolimits_{w_j} \beta_j \ell'_j(w^*) w_j + E(w_j;\theta_j).
             \label{eq:inner}
\end{align}
$\hat w=(\hat w_1,\dotsc,\hat w_d)\tr$ is based on a perturbed objective that
incorporates the local (first-order) behavior of the outer loss
$\ell$. Both $w^*$ and $\hat w$ implicitly depend on the current value of
$\theta$, and $\hat w$ depends on a chosen ``step size'' vector
$\vec\beta:=(\beta_j)_{j=1}^d$ with each $\beta_j>0$. If
$\mathcal E(\cdot;\theta)$ is continuous at $w=w^*$, then
$\lim_{\beta_j\to 0^+} \hat w_j = w_j^*$. Further, if $\mathcal E$ is of the
form given in~\eqref{eq:fenchel_E}, then $\hat w$ is as easy to compute as
$w^*$:
\begin{proposition}
  Let $\mathcal E(w;\theta) = G(w) - w\tr\theta$ and
  $w^*=\arg\min_w \mathcal E(w;\theta)$ be explicitly given as
  $w^*=\vec s(\theta)$.  Then
  \begin{align}
    \hat w = \vec s\big( \theta - \vec\beta \odot \ell'(w^*) \big).
  \end{align}
\end{proposition}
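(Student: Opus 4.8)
The plan is to show that the perturbed inner minimization defining $\hat w$ in~\eqref{eq:inner} has exactly the same algebraic structure as the one defining $w^*$, only with the linear coefficient $\theta_j$ replaced by a shifted value. By the separability in~\eqref{eq:separable_E} it suffices to argue element-wise. Starting from the definition of $\hat w_j$ and substituting $E(w_j;\theta_j) = G(w_j) - w_j\theta_j$, the objective is $\beta_j\ell'_j(w^*)w_j + G(w_j) - w_j\theta_j$; collecting the two terms that are linear in $w_j$ rewrites this as $G(w_j) - w_j\big(\theta_j - \beta_j\ell'_j(w^*)\big)$. Hence $\hat w_j = \arg\min_{w_j}\, G(w_j) - w_j\big(\theta_j - \beta_j\ell'_j(w^*)\big)$.

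The second step is to invoke the hypothesis $w^* = \vec s(\theta)$. Read element-wise, and using that $G$ (hence $\mathcal G$) carries no dependence on $\theta$, this hypothesis says precisely that $s(\eta) = \arg\min_{u}\big( G(u) - u\eta \big)$ for every real argument $\eta$ --- i.e.\ $s$ is the minimizer map of the one-dimensional family $u \mapsto G(u) - u\eta$, which is well defined since $G$ is coercive and bounded from below (and we fix the same selection used to define $w^*$ in case of ties). Applying this identity with $\eta = \theta_j - \beta_j\ell'_j(w^*)$ gives $\hat w_j = s\big(\theta_j - \beta_j\ell'_j(w^*)\big)$ for each $j$, and stacking the coordinates yields $\hat w = \vec s\big(\theta - \vec\beta \odot \ell'(w^*)\big)$, which is the claim.

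The only point needing care is interpretational rather than a genuine obstacle: ``$w^* = \vec s(\theta)$'' must be understood as identifying $s$ with the argmin operator of $u \mapsto G(u) - u\eta$ uniformly in $\eta$, not merely at the single point $\theta$ of interest. This is automatic here, since the scalar subproblems differ only in their linear coefficient and $G$ is the same throughout, so the shift $\theta_j \mapsto \theta_j - \beta_j\ell'_j(w^*)$ simply lands us at another member of the same family. (If $G$ is differentiable one could instead match the first-order conditions $G'(\hat w_j) = \theta_j - \beta_j\ell'_j(w^*)$ and $G'(w_j^*) = \theta_j$, but the direct substitution above is cleaner and requires no smoothness assumption on $G$.)
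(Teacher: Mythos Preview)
Your proof is correct and follows essentially the same approach as the paper: both absorb the linear perturbation term $\beta_j\ell'_j(w^*)w_j$ into the linear coefficient, recognize the resulting problem as $\mathcal E(w;\tilde\theta)$ with $\tilde\theta = \theta - \vec\beta\odot\ell'(w^*)$, and then apply the definition of $\vec s$. The only difference is presentational---you argue element-wise and add an explicit remark about $s$ being the argmin map uniformly in its argument, whereas the paper does the same substitution in vector form in two lines.
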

\begin{proof}
  We simply absorb the linear perturbation term into $\theta$, yielding
  $\tilde\theta := \theta - \vec\beta\odot\ell'(w^*)$, and therefore
  $\hat w$ solves
  \begin{align}
    \hat w = \arg\min\nolimits_w G(w) -w\tr \tilde\theta = \arg\min\nolimits_w \mathcal E(w;\tilde\theta).
  \end{align}
  Hence, $\hat w = \vec s(\tilde\theta) = \vec s(\theta-\vec\beta\odot\ell'(w^*))$ as claimed.
\end{proof}
All of the interesting choices $\mathcal E$ lead to efficient forward mappings
$s$ (like the choices $\mathcal E_{\text{tanh}}$ and
$\mathcal E_{\text{hard-tanh}}$ given earlier that resulted in tanh and hard
tanh functions).

\subsection{Updating the latent weights $\theta$}
\label{sec:updating_theta}

For a fixed choice of $\vec\beta = (\beta_1,\dotsc,\beta_d)\tr$ with
$\beta_j>0$, the relaxed objective $\mathcal L(\theta)$
in~\eqref{eq:contrastive_L} is a nested minimization instance with a
``min-min-max'' structure. In some cases it is possible to obtain a pure
``min-min-min'' instance via duality~\cite{zach2019contrastive}, but in
practice this is not necessary. Let $\theta^{(t)}$ be the current solution at
iteration $t$, then our employed local model to determine the new iterate
$\theta^{(t+1)}$ is given by
\begin{align}
  Q(\theta;\theta^{(t)}) &= \sum\nolimits_j \tfrac{1}{\beta_j} \big( E(\hat w_j;\theta_j) - E(w_j^*;\theta_j) \big) \nonumber \\
  {} &+ \tfrac{1}{2\eta} \norm{\theta-\theta^{(t)}}^2,
\end{align}
where $w^*=\vec s(\theta^{(t)})$ and
$\hat w = \vec s(\theta^{(t)}-\vec\beta\odot\ell'(w^*))$
are the effective weights and its perturbed instance, respectively, evaluated
at $\theta^{(t)}$. The last term in $Q$ regularizes deviations from
$\theta^{(t)}$, and $\eta$ plays the role of the learning rate. Minimizing
$Q(\theta;\theta^{(t)})$ w.r.t.\ $\theta$ yields a gradient descent-like update,
\begin{align}
  \theta^{(t+1)} &= \arg\min\nolimits_\theta Q(\theta;\theta^{(t)}) \nonumber \\
  {} &= \theta^{(t)} - \eta \big( w^* - \hat w \big) \oslash \vec\beta
  \label{eq:theta_update}
\end{align}
for the assumed form of $\mathcal E$ in~\eqref{eq:fenchel_E}.
Each element of $(w^*-\hat w)\oslash \vec\beta$, i.e.\ 
$(w_j^*-\hat w_j)/\beta_j$, corresponds to a finite difference approximation
(using backward differences) of
\begin{align}
  -\tfrac{d}{d\beta_j} s\big( \theta^{(t)}_j - \beta_j\ell'_j(w^*) \big)\big|_{\beta_j=0^+}
\end{align}
with spacing parameter $h_j = \beta_j \ell'_j(w^*)$. If $s$ is at least
one-sided differentiable, then it can be shown that these
finite differences converge to a derivative given by the chain rule when
$\beta_j\to 0^+$~\cite{Zach2021},
\begin{align}
  \tfrac{1}{\beta_j} \big( w_j^* &- \hat w_j \big)
                       \stackrel{\beta_j\to 0^+} \to -\tfrac{d}{d\beta} s(\theta^{(t)}_j - \beta_j\ell'_j(w^*))\big|_{\beta_j=0^+} \nonumber \\
  {} &= \ell_j'(s(\theta^{(t)}_j)) s'(\theta^{(t)}_j) = \tfrac{d}{d\theta_j} \ell(s(\theta^{(t)})).
\end{align}
For non-infintesimal $\beta_j>0$ the finite difference slope
$(w_j^*-\hat w_j)/\beta_j$ corresponds to a perturbed chain rule,
\begin{align}
  \tfrac{1}{\beta_j} \big( w^*_j - \hat w_j \big) = \ell'_j(w^*) s'\big( \theta_j^{(t)}-\gamma_j \ell'_j(w^*) \big)
  \label{eq:mod_chain_rule}
\end{align}
(recall that $w^*=s(\theta^{(t)})$), where the inner derivative is evaluated at
a perturbed argument $\theta^{(t)}-\vec\gamma\odot\ell'(w^*)$ for a
$\vec\gamma \in [0,\vec\beta]$. This is a consequence of the mean value theorem.
Moreover, if each $\beta_j$ is a stationary point of the mapping
\begin{align}
    \beta\mapsto \tfrac{1}{\beta} \big( w_j^*-\hat w_j \big) = \tfrac{1}{\beta} \big( w_j^*-s(\theta_j^{(t)} - \beta\ell'_j(w^*)) \big),
\end{align}
then by using the quotient rule it is easy to see that $\vec\gamma=\vec\beta$, and therefore
\begin{align}
  \tfrac{1}{\beta_j} \big( w_j^* - \hat w_j \big) = \ell_j'(w^*) s'\big( \theta^{(t)}_j - \beta_j\ell'_j(w^*) \big).
\end{align}
Additionally, the relation in~\eqref{eq:mod_chain_rule} can be interpreted as a particular instance of mirror descent (recall Sec.~\ref{sec:mirror_descent}) as shown in the appendix.
Overall, the above means that we can relatively freely select where $s'$ is actually
evaluated. Since $s$ is naturally a ``squashing'' function mapping
$\mathbb R$ to the bounded interval $[-1,1]$, gradient-based training using
$s'$ usually suffers from the vanishing gradient problem.  Using the relaxed
reformulation for bi\-level programs allows us to select $\beta_j$ to obtain a
desired descent direction as it will be described in
Section~\ref{sec:adaptive_beta}.

The resulting gradient-based training method is summarized in
Alg.~\ref{alg:training}. The algorithm is stated as full batch method, but the
extension to stochastic variants working with mini-batches drawn from
$p_{\text{data}}$ is straightforward. In the following section we discuss our
choice of $\mathcal E$ and how to select suitable spacing parameters
$\vec\beta^{(t)}>0$ in each iteration. Since $\vec\beta^{(t)}$ is chosen
adaptively based on the values of $\theta^{(t)}$ and $\ell'(w^*)$ and used to
perturb the chain rule, we call the resulting algorithm the \emph{adaptive
  straight-through estimator} (AdaSTE) training method.

\begin{algorithm}[htb]
  \begin{algorithmic}[1]
    \STATE{Initialize $\theta^{(0)}$, choose learning rates $\eta^{(t)}$, $t=1,\dotsc$}
    \FOR{$t=1,\dotsc$}
    \STATE{$w^*\gets \vec s(\theta^{(t)})$}
    \STATE{Run regular back-propagation to determine $\ell'(w^*)$}
    \STATE{Determine $\vec\beta^{(t)}$ using~\eqref{eq:beta_choice}}
    \STATE{$\hat w \gets \vec{s}\big( \theta^{(t)} - \vec\beta^{(t)}\odot \ell'(w^*) \big)$}
    \STATE{$\theta^{(t+1)} \gets \theta^{(t)} - \eta^{(t)} (w^*-\hat w) \oslash \vec\beta^{(t)}$}
    \ENDFOR
  \end{algorithmic}
  \caption{AdaSTE training method.}
  \label{alg:training}
\end{algorithm}

\subsection{Our choice for the inner objective $\cal E$}
\label{sec:new_reg}

In this section we will specify our choice for $\mathcal{E}$ (and thus the
mapping $\vec s:\theta \mapsto \arg\min_w \mathcal E(w;\theta)$). The straightforward
options of $\mathcal E_{\text{tanh}}$ and
$\mathcal E_{\text{hard-tanh}}$ (Section~\ref{sec:bilevel}) suffer from the
fact that the induced arg-min mappings coincide exactly with the sign function
only when the hyper-parameter $\tau^{-1}\to\infty$. We are interested in an
inner objective $\mathcal E$ that yields perfect quanitized mappings for
finite-valued choices of hyper-parameters. Inspired by the double-well cost
used in ProxQuant~\cite{Bai2019}, we design $\mathcal E$ as follows,
\begin{align}
  \mathcal E(w;\theta) &= \tfrac{1+\mu}{2} \norm{w}^2 - w\tr\theta - \mu (1\!+\!\alpha) \norm{w}_1 + \imath_{[-1,1]^d}(w),
\end{align}
where $\mu>0$ and $\alpha\in(0,1)$ are free parameters. Note that
$\mathcal E$ is only piecewise convex in $w$ for fixed $\theta$, but it is
fully separable in $w_j$ with
\begin{align}
  E(w_j;\theta_j) = \tfrac{1+\mu}{2} w_j^2 - w_j\theta_j - \mu (1\!+\!\alpha) |w_j| + \imath_{[-1,1]}(w_j).
  \label{eq:our_E}
\end{align}
Via algebraic manipulations we find the following closed-form expression for
$\hat w_j$ (where we abbreviate $\ell'$ for $\ell'(w^*)$),
\begin{align}
    \hat w_j &= \arg\min_{w_j} \beta_j \ell'_j w_j + E(w_j; \theta_j) \nonumber \\
    &= \Pi_{[-1,1]} \left( \frac{\tilde\theta_j + \mu(1+\alpha) \sgn(\tilde\theta_j)}{1+\mu} \right),
      \label{eq:our_s_mapping}
\end{align}
with $\tilde\theta_j := \theta_j-\beta_j\ell_j'$. In other words, the forward
mapping $\vec s:\theta \mapsto w^*=s(\theta)$ for our choice of $\mathcal E$ is given by
\begin{align}
  \vec s(\theta) &= \Pi_{[-1,1]^d} \left( \frac{\theta + \mu(1+\alpha) \vec\sgn(\theta)}{1+\mu} \right).
\end{align}
The piece-wise linear graph of this mapping is illustrated in
Fig.~\ref{fig:activation} for $\alpha=1/100$ and three different choices of $\mu$. Let
$\alpha\in(0,1)$ be given, then $\vec s(\theta)$ attains only values in
$\{-1,1\}^d$ even for finite $\mu$, since
\begin{align}
    &\frac{|\theta_j| + \mu(1+\alpha)}{1+\mu} \ge 1 \iff |\theta_j| + \mu(1+\alpha) \ge 1 + \mu \nonumber \\
    &\iff  |\theta_j| + \alpha\mu \ge 1,
\end{align}
which implies that any $\theta_j$ is always mapped to +1 or -1 when
$\mu\ge 1/\alpha$ (and the exact values of $\mu$ and $\alpha$ do not matter in this case).
Consequently we have both the option to train with strictly
binary weights from the beginning, or to train via a homotopy method by
adjusting $\alpha$ or $\mu$. Both choices lead to competitive results with the
homotopy-based method having a small advantage in some cases as demonstrated
in Section~\ref{sec:results}.

\begin{figure}[tb]
  \centering
  \includegraphics[width=0.65\textwidth]{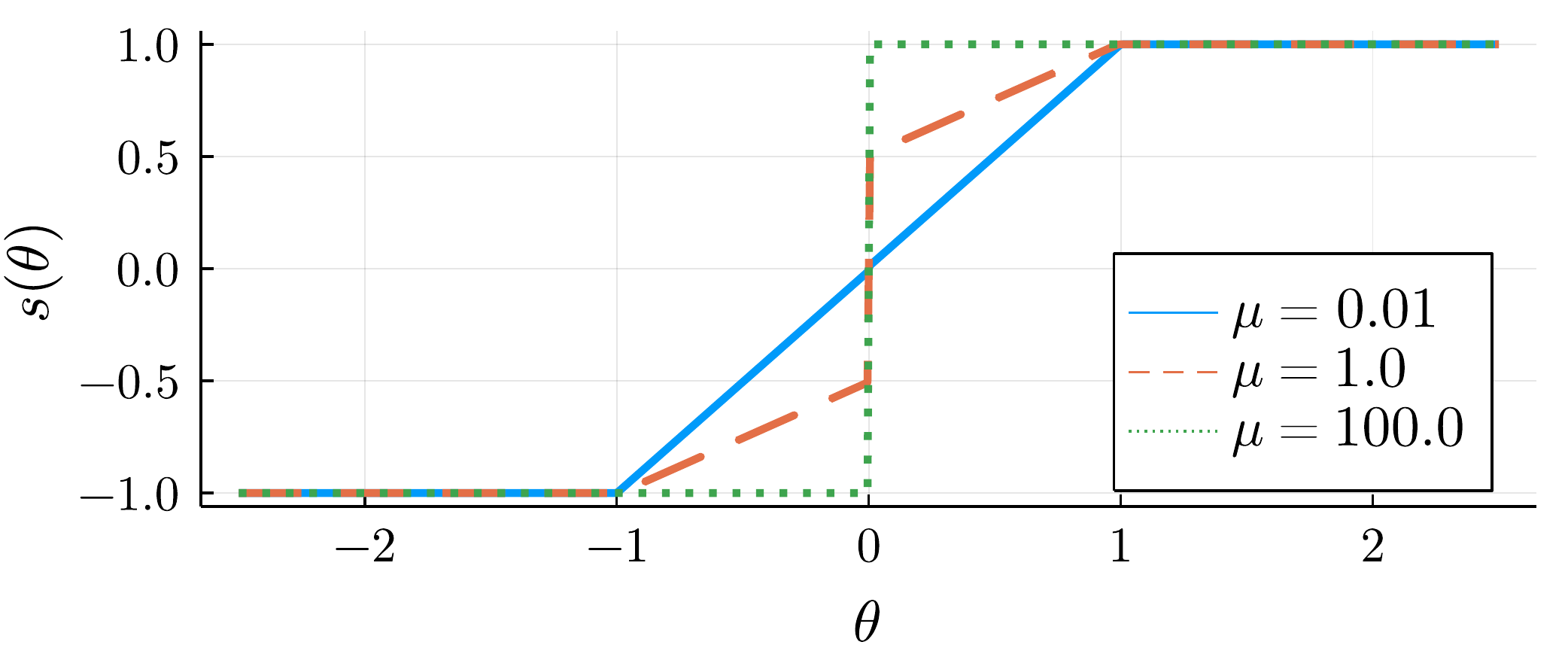}
  \caption{The graph of the mapping $w^*=s(\theta)$ given
    in~\eqref{eq:our_s_mapping} for $\alpha=1/100$ and three different values of $\mu$.}
  \label{fig:activation}
\end{figure}

\begin{table*}[htb]
\centering
 \footnotesize
\begin{tabular}{|l|cc|cc|c|}
\hline
\multirow{2}{*}{Implementation} & \multicolumn{2}{c|}{CIFAR-10} & \multicolumn{2}{c|}{CIFAR-100} & \multicolumn{1}{c|}{TinyImageNet} \\
                    & VGG-16       & ResNet-18      & VGG-16       & ResNet-18       & ResNet-18 \\ \hline
Full-precision ($\dagger$)       & 93.33        & 94.84          & 71.50        & 76.31           & 58.35 \\ \hline
BinaryConnect (*)      & 89.75$\pm$0.26    & 91.92$\pm$0.23        & 54.61$\pm$2.37    & 68.67$\pm$0.7   & - \\
BinaryConnect ($\dagger$)      & 89.04        & 91.64          & 59.13        & 72.14           & 49.65 \\
ProxQuant($\dagger$)            & 90.11        & 92.32          & 55.10        & 68.35           & 49.97 \\
PMF($\dagger$)                  & 91.40        & 93.24          & 64.71        & 71.56           & 51.52 \\
\hline
MD-softmax ($\dagger$)         & 90.47        & 91.28          & 56.25        & 68.49           & 46.52 \\
MD-softmax-s ($\dagger$)         & 91.30        & 93.28          & 63.97        & 72.18           & 51.81 \\
MD-softmax-s (*)        & 83.69$\pm$0.33                & 91.56$\pm$0.14                      & 48.23$\pm$0.55                 & 68.35$\pm$0.96               & - \\
\hline
MD-tanh ($\dagger$)            & 91.64        & 92.27          & 61.31        & 72.13           & 54.62 \\
MD-tanh-s ($\dagger$)        & 91.53        & 93.18          & 61.69        & 72.18           & 52.32 \\
MD-tanh-s (*)           & 90.22$\pm$0.24    & 91.41$\pm$0.11      & 60.14$\pm$0.58      & 66.38$\pm$0.26  & - \\
\hline
BayesBiNN (*)          & 90.68$\pm$0.07   & 92.28$\pm$0.09 & 65.92$\pm$0.18 & 70.33$\pm$0.25 & 54.22  \\ \hline
AdaSTE (no annealing) (*)  & {92.16}$\pm$0.16        & 93.96$\pm$0.14          & 68.46$\pm$0.18        & 73.90$\pm$0.20               &53.49                             \\
AdaSTE (with annealing) (*)  & \textbf{92.37}$\pm$0.09        & \textbf{94.11}$\pm$0.08           & \textbf{69.28}$\pm$0.17         & \textbf{75.03}$\pm$0.35               &\textbf{54.92}                             \\
\hline
\end{tabular}
\caption{Classification accuracy for different methods. (*) indicates that experiments have been run 5 times using different random seeds (except for TinyImageNet). ($\dagger$) indicates that results are obtained from the numbers reported by~\cite{Ajanthan2021}.}
\label{tab:acc}
\end{table*}

\subsection{Adaptive choice for $\beta$}
\label{sec:adaptive_beta}

As indicated in Section~\ref{sec:updating_theta}, we can steer the modified
chain rule by selecting $\beta_j>0$ appropriately in order to determine a
suitable descent direction. Note that each element $\theta_j$ in the vector of
parameters $\theta$ has its own value for $\beta_j$. Below we describe how
$\beta_j$ is chosen when $\alpha$ and $\mu$ satisfy $\mu\alpha\ge 1$.
In this setting we always have $w_j^*=\sgn(\theta_j)\in\{-1,1\}$ and
$\hat w_j=\vec\sgn(\theta_j-\beta_j\ell'_j(w^*))\in\{-1,1\}$ (we ignore the
theoretical possibility of $\theta_j=0$ or
$\theta_j-\beta_j\ell_j'(w^*)=0$). Our aim is to select $\beta_j>0$ such that
the slope induced by backward differences,
$\frac{1}{\beta_j}(w_j^*-\hat w_j)$, is as close to $\ell_j'(w^*)$ as
possible. In the following we abbreviate $\ell'(w^*)$ to $\ell'$.

Since $\sgn$ is an increasing step-function with derivative being zero almost
everywhere, its finite difference approximation
\begin{align}
  \tfrac{1}{\beta_j} \big( w_j^*-\hat w_j \big) = \tfrac{1}{\beta_j} \big( \sgn(\theta_j)-\sgn(\theta_j-\beta_j\ell'_j) \big)
\end{align}
lies either in the interval $[0,s_{\max}]$ or in $[-s_{\max}, 0]$ for a
suitable $s_{\max}\ge 0$ (which is dependent on $\theta_j$ and
$\ell'_j$). In particular, if $\theta_j \ell'_j \le 0$, then
$\sgn(\theta_j)=\sgn(\theta_j-\beta_j\ell'_j)$ for all $\beta_j\ge 0$ and
$s_{\max}=0$.  On the other hand, if $\theta_j \ell'_j > 0$, then
$\sgn(\theta_j-\beta_j\ell'_j)\ne \sgn(\theta_j)$ for
$\beta_j > \theta_j/\ell'_j$ and therefore
\begin{align}
  \sup_{\beta_j>\theta_j/\ell'_j} \frac{|w_j^*-\hat w_j|}{\beta_j} = \frac{2\ell'_j}{\theta_j}.
\end{align}
If $\theta_j$ is close to~0, then the r.h.s.\ may grow arbitrarily large
(reflecting the non-existence of the derivative of $\sgn$ at~0).  Assuming
that $(w_j^*-\hat w_j)/\beta_j$ should maximally behave like a
straight-through estimator (i.e.\
$|w_j^*-\hat w_j|/\beta_j\le |\ell'_j|$, which also can be seen as a form of
gradient clipping), we choose
\begin{align}
    \beta_j = \tfrac{1}{|\ell'_j|} \max\{ 2, |\theta_j| \}
\end{align}
in order to guarantee that
\begin{align}
  \tfrac{1}{\beta_j} |w_j^*-\hat w_j| \le \tfrac{2}{\beta_j} \le \tfrac{2|\ell'_j|}{2} = |\ell'_j|.
\end{align}
Overall, we obtain the following simple rule to assign each $\beta_j$ for
given $\theta$ and $\ell'$:
\begin{align}
  \label{eq:beta_choice}
  \beta_j \gets
  \begin{cases}
    \tfrac{1}{|\ell'_j|} \max\{ 2, |\theta_j| \} & \text{if } \theta_j \ell'_j > 0 \\
    1 & \text{otherwise.}
  \end{cases}
\end{align}
The choice of $\beta_j=1$ in the alternative case is arbitrary, since
$(w_j^*-\hat w_j)/\beta=0$ for all values $\beta>0$. Observe that the
assignment of $\beta_j$ in~\eqref{eq:beta_choice} selectively converts
$(w_j^*-\hat w_j)/\beta_j$ into a scaled straight-through estimator whenever
$\theta_j \ell'_j > 0$, otherwise the effective gradient used to update
$\theta_j$ is zero (in agreement with the chain rule).

In the appendix we discuss the setting when
$\mu\alpha<1$, which yields in certain cases different expressions for
$\beta_j$. Nevertheless, we use~\eqref{eq:beta_choice} in all our experiments.

\section{Experimental Results}
\label{sec:results}

In this section, we show several experimental results to validate the performance of our proposed method and compare it against existing algorithms that achieve state-of-the-art performance for our particular problem settings. As mentioned above, we only consider the training of networks with fully binarized weights and real-valued activations. 

Following previous works~\cite{Ajanthan2021,Bai2019,meng2020}, we use classification as the main task throughout our experiments.  In particular, we evaluate the performance of the algorithms on the two network architectures: ResNet-18 and VGG16. The networks are trained and evaluated on the CIFAR10, CIFAR100 and TinyImageNet200~\cite{le2015tiny} datasets.
We compare our algorithm against state-of-the-art approaches, including BinaryConnect (BC)~\cite{Courbariaux2016}, ProxQuant (PQ)~\cite{Bai2019}, Proximal Mean-Field (PMF)~\cite{Ajanthan2019}, BayesBiNN~\cite{meng2020}, and several variants of Mirror Descent (MD)~\cite{Ajanthan2021}.
We employ the same standard data augmentations and normalization as employed by the methods we compare against (please refer to our appendix for more details about the experimental setup). Our method is implemented in Pytorch and is developed based on the software framework released by BayesBiNN's authors\footnote{\url{https://github.com/team-approx-bayes/BayesBiNN}} (more details regarding our implementation can be found in the appendix). 

\subsection{Classification Accuracy}
In Table~\ref{tab:acc}, we report the best testing accuracy obtained by the considered methods. For PQ, PMF, the unstable versions of MD as well as for full-precision reference networks, we use the best results report in~\cite{Ajanthan2021}. For BC, the stable variants of MD (i.e. MD-softmax-s and MD-tanh-s), we reproduce the results by running the source code released by the authors\footnote{\url{https://github.com/kartikgupta-at-anu/md-bnn}} (using the default recommended hyper-parameters) for $5$ different random initializations, and reporting the mean and standard deviation obtained from these runs. The same strategy is also applied to BayesBiNN (hyper-parameters for BayesBiNN can be found in the appendix), except for the TinyImageNet dataset where we only report results for a single run (due to longer training time of TinyImageNet). We report the results for our method using two settings:
\begin{itemize}
    \item Without annealing: we set $\alpha=0.01$ and fix $\mu = \frac{1}{\alpha}$ throughout training.
    \item With annealing: we also use $\alpha=0.01$ and set the initial value $\mu$ to $\mu^{(0)}=1.0$, then increase $\mu$ after each epoch by a factor of $\gamma$, i.e. $\mu^{(t)} \gets \gamma\mu^{(t-1)}$. $\gamma$ is chosen such that $\mu$ reaches $1/\alpha$ after $\approx 200$ epochs.
\end{itemize}
The impact of the choice of $\mu$ on the shape of $\vec{s}(\theta)$ is illustrated in Fig.~\ref{fig:activation}.

\begin{figure*}[ht]
    \centering
    \includegraphics[width=0.49\textwidth]{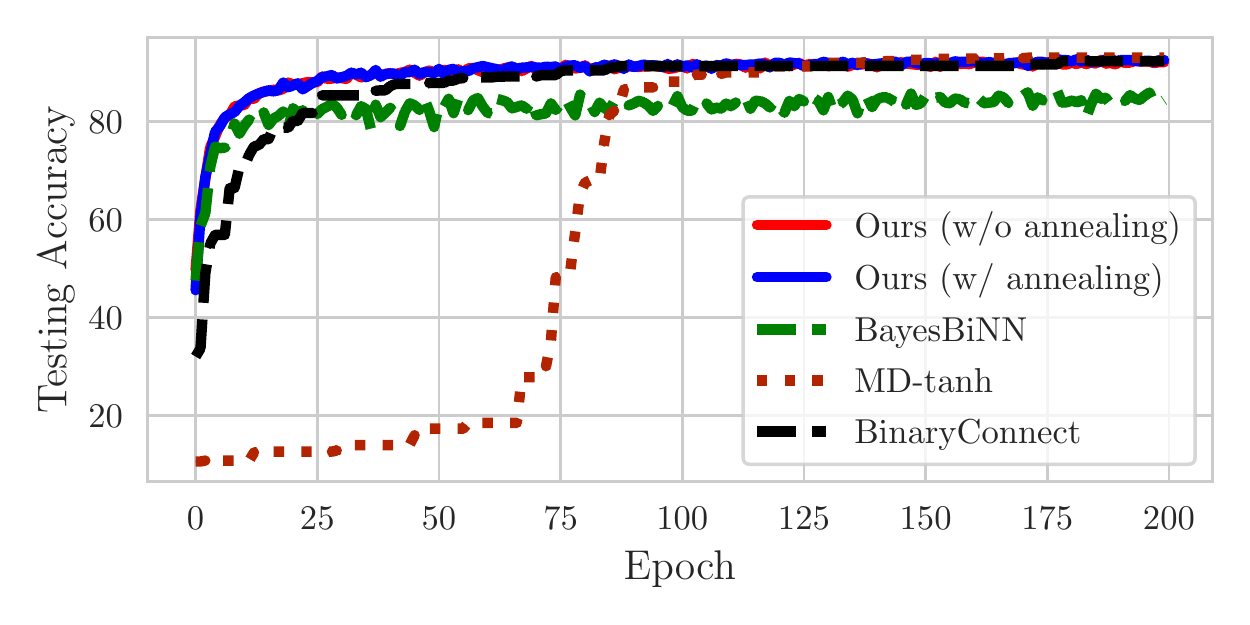}
    \includegraphics[width=0.49\textwidth]{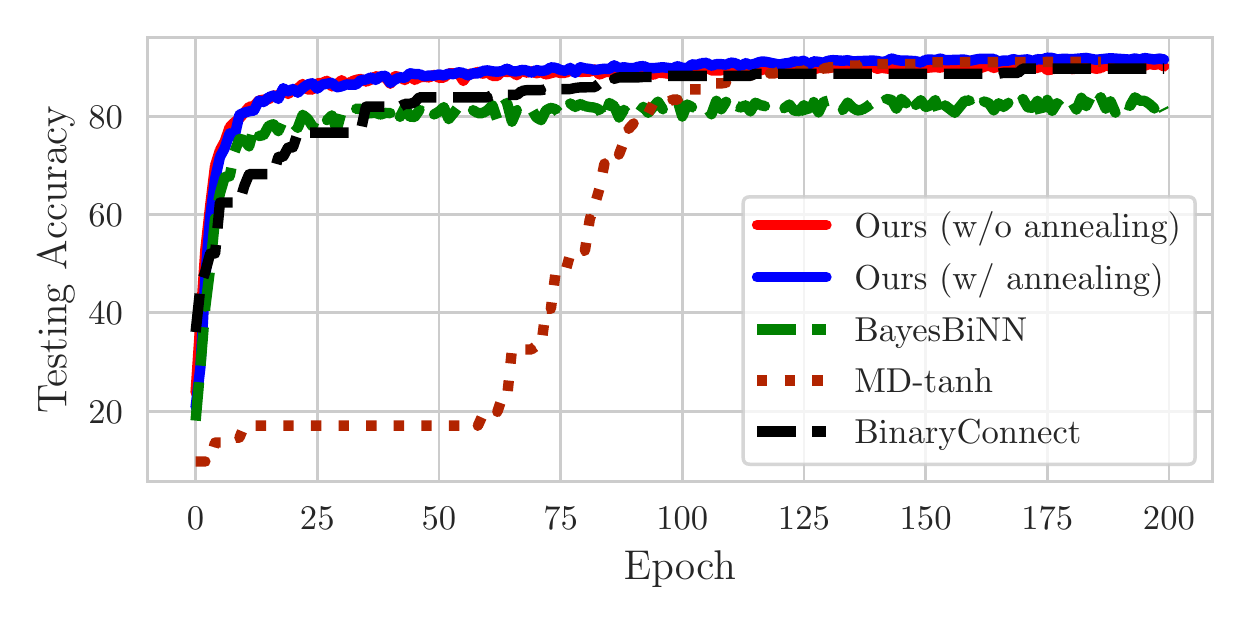}
    \caption{Testing accuracy achieved by the methods for the first 200 epochs with ResNet-18 (left) VGG16 (right) for CIFAR10 dataset (plots for CIFAR100 can be found in the appendix).   }
    \label{fig:acc_200}
\end{figure*}

\begin{figure*}[ht]
    \centering
    \includegraphics[width=0.49\textwidth]{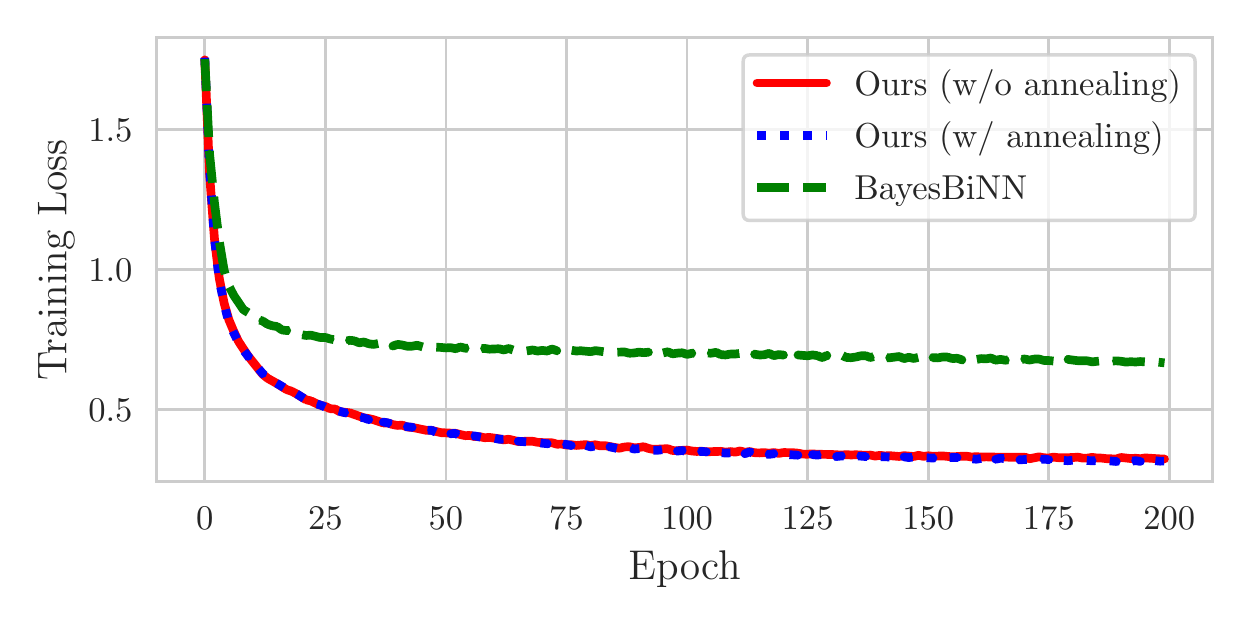}
    \includegraphics[width=0.49\textwidth]{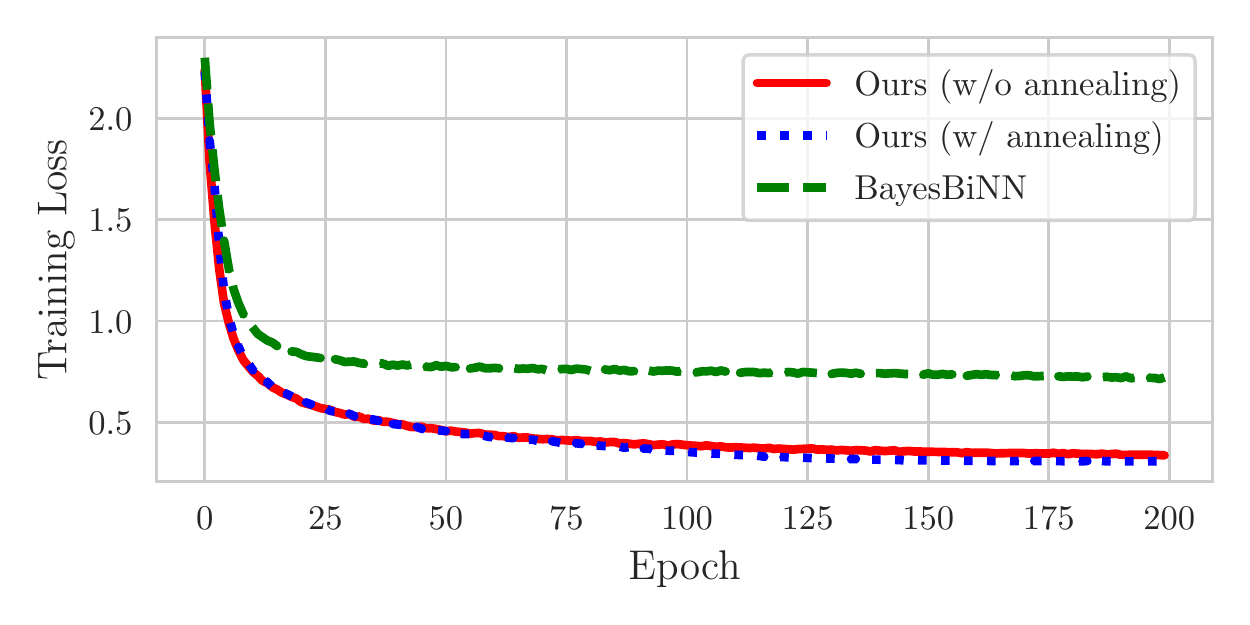}
    \caption{Training loss of the methods for the first 200 epochs with ResNet-18 (left) and VGG16 (right) on the CIFAR10 dataset (see appendix for plots of CIFAR100 dataset) .   }
    \label{fig:loss_200}
\end{figure*}

Table~\ref{tab:acc} demonstrates that our proposed algorithm achieves state-of-the-art results. Note that we achieve highly competitive results even without annealing $\mu$ (although annealing improves the test accuracy slightly but consistently).
Hence, we conclude that AdaSTE without annealing (and therefore no additional hyper-parameters) can be used as direct replacement for BinaryConnect.
Note that we report all results after training for $500$ epochs. In the appendix, we will show that both BayesBiNN and AdaSTE yield even higher accuracy if the models are trained for higher number of epochs.

\subsection{Evolution of Testing Accuracy and Training Losses}

We further investigate the behavior of the algorithms during training. In particular, we are interested in the evolution of training losses and testing accuracy, since these quantities are---in addition to the achieved test accuracy---of practical interest.

In Fig.~\ref{fig:acc_200}, we plot the testing accuracy obtained by our method in comparison with BC, MD (using the tanh mapping), and BayesBiNN for the first 200 epochs. For our method, we show the performance for both settings with and without annealing (as described earlier). To obtain the plots for MD and BayesBiNN, we use the code provided by the authors with the default recommended hyper parameters. For BC, we use the implementation provided by MD authors. As can be observed, AdaSTE quickly reaches very high test accuracy compared to other approaches. The MD-tanh approach (using the recommended annealing schedule from the authors~\cite{Ajanthan2021}) only reaches satisfactory accuracy after approximately $100$ epochs. We also try starting MD-tanh with a larger annealing parameter (i.e. the $\beta$ hyper-parameter in~\cite{Ajanthan2021}), but that yields very poor results (see the appendix for more details). AdaSTE, on the other hand, is quite insensitive to the annealing details, and yields competitive results even without annealing.

Fig.~\ref{fig:loss_200} depicts the training loss of our methods compared to BayesBiNN. We choose to compare AdaSTE against our main competitor, BayesBiNN, as we have full control of the source code to assure that both methods are initialized with the same starting points.  As can be seen, our method quickly reduces the training loss, while BayesBiNN takes longer for the training loss to converge.  
Note that BayesBiNN leverages the reparametrization trick and relies therefore
on weights sampled from respective distributions at training time. In that
sense AdaSTE is a purely deterministic algorithm, and the only source of
stochasticity is the sampled mini-batches. This might be a factor explaining
AdaSTE's faster reduction of the training loss.

\section{Discussion and Conclusion}

In this work we propose AdaSTE, an easy-to-implement replacement for the
straight-through gradient estimator, and we demonstrate its benefits for
training DNNs with strictly binary weights.  One clear limitation in this work
is, that we focus on the binary weight but real-valued activations scenario,
which is a highly useful setting, but still prevents low-level implementations
using only $\operatorname{xor}$ and bit count operations.  Extending AdaSTE to
binary activations seems straightforward, but will be more difficult to
justify theoretically, and we expect training to be more challenging in
practice.
One obvious further shortcoming is our restriction to purely binary
quantization levels, in particular to the set $\{+1,-1\}$.  Generalizing the
approach to arbitrary quantization levels can be done in several ways, e.g.\
by extending the W-shaped cost $E$ in~\eqref{eq:our_E} to more minima or by
moving to higher dimensions (e.g.~by modeling parameters in the probability
simplex).

Since weight quantization is one option to regulate the Lipschitz property of a DNNs' forward mapping (and also
its expressive power), the impact of
weight quantization~\cite{song2020improving,duncan2020relative} (and more generally
DNN model compression~\cite{gui2019model,ye2019adversarial}) on adversarial robustness has been recently
explored. Hence, combining our adaptive straight-through gradient estimator
with adversarial training is one direction of future work.

{
  \small
  \bibliographystyle{plain}
  \bibliography{literature}
}

\appendix

\section{A Mirror Descent Interpretation of AdaSTE}

In this section we establish a connection between AdaSTE and mirror descent
with a data-adaptive and varying metric. Since the update in AdaSTE is applied
element-wise, we focus on the update of $\theta_j$ (a scalar) in the
following. For brevity of notation we drop the subscript $j$.

We consider using a ``partial'' chain rule as follows. Let the target forward
mapping be the composition of $s_1$ and $s_2$, i.e.\ $s=s_2\circ s_1$. Then
the AdaSTE update step is abstractly given by
\begin{align}
  \theta^{(t+1)} &\gets \theta^{(t)} - \eta \ell'(s_2(s_1(\theta^{(t)}))) s_2'(s_1(\theta^{(t)})).
\end{align}
Observe that only one step of the chain rule is applied on $\ell$ as
$s_1'$ is not used. We introduce an ``intermediate'' weight
$u=s_1(\theta)$, and therefore
$w=s_2(u)=s_2(s_1(\theta))=s(\theta)$. Expressing the above update step in
$u$ yields
\begin{align}
  s_1^{-1}(u^{(t+1)}) &\gets s_1^{-1}(u^{(t)}) - \eta \ell'(s_2(u^{(t)})) s_2'(u^{(t)}),
\end{align}
and identifying $s_1^{-1}$ with the mirror map $\nabla\Phi$ results eventually in
\begin{align}
  u^{(t+1)} &= \arg\min_u \tfrac{1}{\eta} D_\Phi(u\|u^{(t)}) + \ell'(s_2(u^{(t)})) s_2'(u^{(t)}) \nonumber \\
  &= \arg\min_u \tfrac{1}{\eta} D_\Phi(u\|u^{(t)}) + \tfrac{d}{du} \ell(s_2(u)) \big|_{u=u^{(t)}} .
\end{align}
Now the question is whether there exist mappings $s_1$ and $s_2$ such that
\begin{align}
  s_2(s_1(\theta)) = s(\theta) & & s_2'(u) = s'(s_1^{-1}(u) - h),
\end{align}
where will be chosen as $h=\beta\ell'$ in AdaSTE. The first relation yields
\begin{align}
  s_1(\theta) = s_2^{-1}(s(\theta)) \;\text{ and } \; s_1^{-1}(u) = s^{-1}(s_2(u)).
\end{align}
Hence, the second condition above is equivalent to
\begin{align}
  s_2'(u) = s'(s_1^{-1}(u) - h) = s'\big(s^{-1}(s_2(u)) - h \big) \nonumber.
\end{align}
By expressing this relation in terms of $\theta$ we obtain
\begin{align}
  &s_2'(s_1(\theta)) = s'(\theta-h) \iff s_2'(s_2^{-1}(s(\theta))) = s'(\theta-h) \nonumber \\
  &\iff \frac{1}{(s_2^{-1})'(s(\theta))} = s'(\theta-h) \nonumber \\
  &\iff (s_2^{-1})'(w) = \frac{1}{s'(s^{-1}(w)-h)} \nonumber.
\end{align}
Consequently, $s_2^{-1}$ can be determined by solving
\begin{align}
  s_2^{-1}(w) = \int_{w_0}^w \frac{1}{s'(s^{-1}(\omega)-h)}\,d\omega.
\end{align}
If $h=0$, then $s_2^{-1}=s^{-1}$ (and therefore $s_1=\mathrm{id}$) is a valid
solution. For $h\ne 0$, there is sometimes a closed-form expression for
$s_2^{-1}$. We consider $s=\tanh$, i.e.
\begin{align}
  s(\theta) = \frac{e^\theta - e^{-\theta}}{e^\theta + e^{-\theta}}
  = \frac{e^{2\theta}-1}{e^{2\theta}+1} & & s'(\theta) = \frac{4e^{2\theta}}{(e^{2\theta}+1)^2}.
\end{align}
With this choice we obtain (via a computer algebra system)
\begin{align}
  (s_2^{-1})'(w) &= \frac{1}{s'(s^{-1}(w)-h)} \nonumber \\
  &= \frac{e^{-2h}\big( (e^{2h} - 1)w - e^{2h}-1 \big)^2}{4(1-w^2)} \nonumber \\
  &= \frac{\big( (e^h - e^{-h})w - e^h-e^{-h} \big)^2}{4(1-w^2)}.
\end{align}
Now the following relation holds,
\begin{align}
  &\int \frac{(aw+b)^2}{4(1-w^2)}\,dw \nonumber \\
  &\doteq \tfrac{1}{8} \left( -2 a^2 w - (a + b)^2 \log(1 - w) + (a - b)^2 \log(1 + w) \right). \nonumber
\end{align}
Plugging in the values $a=e^h - e^{-h}$ and $b=-e^h - e^{-h}$ (and therefore
$a+b=-2e^{-h}$ and $a-b=2e^h$) results in
\begin{align}
  &s_2^{-1}(w) \nonumber \\
  &= \tfrac{1}{8} \left( -2 (e^h \!-\! e^{-h})^2 w - 4e^{-2h} \log(1 \!-\! w) + 4e^{2h} \log(1 \!+\! w) \right) \nonumber \\
  &= \tfrac{1}{2} \left( e^{2h} \log(1 + w) - e^{-2h} \log(1 - w) \right) \nonumber \\
  &- \tfrac{1}{4} (e^h - e^{-h})^2 w.
\end{align}
As expected, for $h=0$ we obtain $\tanh^{-1}$, and for $h\ne 0$ this mapping
skews $\tanh^{-1}$. The important property is, that $s_2$ is strictly monotone
since $s_2'(s_1(\theta))=s'(\theta-h)>0$. We can recover $s_1$ via
$s_1(x)=s_2^{-1}(s(\theta))$, but that seems to be a non-interpretable
expression in this case.

\section{AdaSTE: the case $\mu\alpha<1$}

As in the previous section we focus on one scalar weight
$\theta_j$/$w_j$ and omit the subscript $j$ in the following. We know that the
actual weight $w$ is obtained via
\begin{align}
  w^* &= \Pi_{[-1,1]}\left( \frac{\theta + \mu(1+\alpha)\sgn(\theta)}{1+\mu} \right) \nonumber \\
  \hat w &= \Pi_{[-1,1]}\left( \frac{\tilde\theta + \mu(1+\alpha)\sgn(\tilde\theta)}{1+\mu} \right),
\end{align}
where $\tilde\theta = \theta-\beta\ell'$. We focus on $\theta<0$, since the
case $\theta>0$ is symmetric. Hence,
\begin{align}
  w^* = \begin{cases}
    -1 & \text{if } \theta \le -1 + \mu\alpha \\
    \frac{\theta-\mu(1+\alpha)}{1+\mu} & \text{ if } \theta \in (-1+\mu\alpha, 0)
  \end{cases}
\end{align}
and
\begin{align}
  \hat w = \begin{cases}
    -1 & \text{if } \tilde\theta \le -1 + \mu\alpha \\
    \frac{\tilde\theta-\mu(1+\alpha)}{1+\mu} & \text{ if } \tilde\theta \in (-1+\mu\alpha, 0)
  \end{cases}.
\end{align}
We are now interested in values for $\beta>0$ maximizing
$|\hat w-w^*|/\beta$. We assume that $\mu\alpha<1$, since the simpler setting
$\mu\alpha\ge 1$ was discussed in the main text.

\paragraph{Case $\ell' > 0$:}
We have $\tilde\theta = \theta-\beta\ell' < \theta$ for all $\beta>0$. Since
$\hat w$ will be clamped at $-1$ for sufficiently large $\beta>0$, the
solution for $\beta$ satisfies
\begin{align}
  \theta - \beta\ell' \in (-1 + \mu\alpha, 0).
\end{align}
If $\theta \le -1+\mu\alpha$, then we have $w^*=\hat w = -1$ for all choices
of $\beta$, and therefore $(\hat w-w^*)/\beta=0$ regardless of
$\beta$. Thus, we assume that $\theta>-1+\mu\alpha$ and therefore
$w^*>-1$. For $\beta$ constrained as above, we have
\begin{align}
  \frac{\hat w - w^*}{\beta} &= \frac{1}{\beta} \cdot \frac{\theta - \beta\ell' - \mu(1+\alpha) - (\theta-\mu(1+\alpha))}{1+\mu} \nonumber \\
  {} &= \frac{1}{\beta} \cdot \frac{\beta\ell'}{1+\mu} = \frac{\ell'}{1+\mu} \nonumber,
\end{align}
which is independent of the exact value of $\beta$ as long it is in the
allowed range,
\begin{align}
  \beta \in \tfrac{1}{\ell'} (\theta, \theta + 1 - \mu\alpha) \cap \mathbb R_{\ge 0}.
\end{align}
We can set $\beta$ as follows,
\begin{align}
  \beta = \min\left\{ \beta_{\max}, \frac{\theta+1-\mu\alpha}{\ell'} \right\} \nonumber
\end{align}
and the error signal is given by $(\hat w - w^*)/\beta = \ell'/(1+\mu)$.

\paragraph{Case $\ell'<0$:}
This means that $\tilde\theta > \theta$ for $\beta>0$. By inspecting the
piecewise linear (and monotonically increasing) mapping
$\theta\mapsto w^*$ we identify two relevant choices for $\beta$:
$\beta_1$ as the smallest $\beta$ such that $\hat w$ is clamped at
$+1$, and $\beta_0$ as the smallest $\beta$ such that $\hat w$ is
positive. Note that $\tilde\theta$ is clamped at $+1$ whenever
$\tilde\theta > 1-\mu\alpha$. Therefore the defining constraints for
$\beta_1$ and $\beta_0$ are given by
\begin{align}
  \theta - \beta_1\ell' = 1 - \mu\alpha & & \theta - \beta_0\ell' = 0^+ \nonumber,
\end{align}
i.e.\ $\beta_1 = (\theta-1+\mu\alpha)/\ell'$ and
$\beta_0 = \theta/\ell'$ (and $\beta_1 > \beta_0$ by construction). If
$\tilde\theta=0^+$, then $\hat w = \mu(1+\alpha)/(1+\mu)$. Consequently,
\begin{align}
  \frac{\hat w_1 - w^*}{\beta_1} &= \frac{\ell'}{\theta-1+\mu\alpha}
                                  \left( 1 - \max\left\{ -1, \frac{\theta - \mu(1+\alpha)}{1+\mu} \right\} \right) \nonumber \\
  \frac{\hat w_0 - w^*}{\beta_0} &= \frac{\ell'}{\theta}
                                  \left( \frac{\mu(1+\alpha)}{1+\mu} - \max\left\{ -1, \frac{\theta - \mu(1+\alpha)}{1+\mu} \right\} \right) \nonumber.
\end{align}
If $\theta\le -1+\mu\alpha$ such that $w^*=-1$, then these expressions simplify to
\begin{align}
  \frac{\hat w_1 - w^*}{\beta_1} &= \frac{2\ell'}{\theta-1+\mu\alpha} > 0 \nonumber \\
  \frac{\hat w_0 - w^*}{\beta_0} &= \frac{\ell'}{\theta} \cdot \frac{\mu+\mu\alpha + 1+\mu}{1+\mu}
                                  = \frac{\ell'(1+2\mu+\mu\alpha)}{(1+\mu)\theta} > 0 \nonumber.
\end{align}
Now $(\hat w_1-w^*)/\beta_1 > (\hat w_0-w^*)/\beta_0$ iff
\begin{align}
  & \frac{2\ell'}{\theta-1+\mu\alpha} > \frac{\ell'(1+2\mu+\mu\alpha)}{(1+\mu)\theta} \nonumber \\
  & \iff \frac{2}{\theta-1+\mu\alpha} < \frac{1+2\mu+\mu\alpha}{(1+\mu)\theta} \nonumber \\
  & \iff 2(1+\mu)\theta < (\theta-1+\mu\alpha) (1+2\mu+\mu\alpha) \nonumber \\
  & \iff (1-\mu\alpha) (\theta + 1 + 2\mu + \mu\alpha) < 0 \nonumber \\
  &\iff \theta < -1-2\mu-\mu\alpha \nonumber.
\end{align}
Visual inspection shows that $\beta_0$ a good solution even when $\beta_1$ is the maximizer: $\beta_0$ does not maximize the slope $(\hat w - w^*)/\beta$, but its slope is close to the maximal one.

If $\theta\in(-1+\mu\alpha,0)$, then
$w^*=(\theta-\mu(1+\alpha))/(1+\mu)$ and therefore
\begin{align}
  \frac{\hat w_1 - w^*}{\beta_1} &= \frac{\ell'}{\theta-1+\mu\alpha} \left( 1 - \frac{\theta - \mu(1+\alpha)}{1+\mu} \right) \nonumber \\
                                 & = \frac{\ell'}{\theta-1+\mu\alpha} \cdot \frac{1 + \mu - \theta + \mu(1+\alpha)}{1+\mu} \nonumber \\
  \frac{\hat w_0 - w^*}{\beta_0} &= \frac{\ell'}{\theta} \cdot \frac{\mu(1+\alpha) - \theta + \mu(1+\alpha)}{1+\mu} \nonumber.
\end{align}
$(\hat w_1-w^*)/\beta_1 > (\hat w_0-w^*)/\beta_0$ iff (after dividing both
sides by $1+\mu>0$)
\begin{align}
  & \frac{(1 + 2\mu +\mu\alpha - \theta) \ell'}{\theta-1+\mu\alpha} > \frac{(2\mu(1+\alpha) - \theta) \ell'}{\theta} \nonumber \\
  &\iff \frac{1 + 2\mu +\mu\alpha - \theta}{\theta-1+\mu\alpha} < \frac{2\mu(1+\alpha) - \theta}{\theta} \nonumber \\
  &\iff (1 + 2\mu +\mu\alpha - \theta)\theta < (2\mu(1+\alpha) - \theta)(\theta-1+\mu\alpha) \nonumber \\
  &\iff 2\mu(1-\mu\alpha) (1+\alpha) < 0 \nonumber
\end{align}
The l.h.s.\ is always positive under our assumptions, therefore
$\beta_0=\theta/\ell'$ is the maximizer in this case.

\iftrue
\section{Imagenette Results and Mixup}
In order to further justify if our model also works well on images at higher resolution, we conduct the same experiment on Imagenette dataset~\cite{howard2020fastai} which are sampled from Imagenet~\cite{deng2009imagenet} without being downsampled and consists of 9469 training images and 3925 validation images. Besides, we also notice that mixup~\cite{he2019bag}, a proven effective training trick, is also helpful in further boosting the classification accuracy. As can be seen in Table~\ref{tab:additional_acc}, it is quite obvious that our AdaSTE consistenly outperforms BayesBiNN on both TinyImageNet and Imagenette datasets with and without mixup.

\begin{table}[!htb]
\centering
\begin{tabular}{lcc}
\hline
\multicolumn{1}{|l|}{}                                 & \multicolumn{1}{c|}{\begin{tabular}[c]{@{}c@{}}TinyImageNet\\ ResNet-18\end{tabular}} & \multicolumn{1}{c|}{\begin{tabular}[c]{@{}c@{}}Imagenette\\ ResNet-18\end{tabular}} \\ \hline
\multicolumn{1}{|l|}{BayesBiNN}                        & \multicolumn{1}{c|}{54.22}                                                            & \multicolumn{1}{c|}{78.19}                                                          \\
\multicolumn{1}{|l|}{BayesBinn (mixup)}                & \multicolumn{1}{c|}{55.84}                                                            & \multicolumn{1}{c|}{79.59}                                                          \\ \hline
\multicolumn{1}{|l|}{AdaSTE}          & \multicolumn{1}{c|}{54.92}                                                            & \multicolumn{1}{c|}{79.66}                                                          \\
\multicolumn{1}{|l|}{AdaSTE mixup)} & \multicolumn{1}{c|}{56.11}                                                            & \multicolumn{1}{c|}{80.91}                                                          \\ \hline
                                                       & \multicolumn{1}{l}{}                                                                  & \multicolumn{1}{l}{}                                                               
\end{tabular}
\caption{Classification accuracy for different methods on Tiny Imagenet and Imagenette: Annealing is applied to our model with and without mixup}
\label{tab:additional_acc}
\end{table}

\fi

\section{Implementation Details}
We implemented our AdaSTE algorithm in PyTorch, which is developed based on the framework provided by BayesBiNN. In particular, we used SGD with momentum of $0.9$ for all experiments. 
\begin{itemize}
    \item For CIFAR-10 and CIFAR-100 datasets, we used batch size of $128$ with learning rate of $10^{-5}$.
    \item For TinyImageNet, the chosen batch size was $100$ with the learning rate of  $10^{-6}$.
\end{itemize}
The experimental results for BayesBiNN were produced with the following hyper parameters: 
\begin{itemize}
    \item Batch size: $128$. 
    \item Learning rate: $3 \times 10^{-4}$.
    \item Momentum: $0.9$.
\end{itemize}

\section{CIFAR-100 Results}
Similar to Fig. $3$ and Fig. $4$ in the main text, in Fig.~\ref{fig:acc_200b} and Fig.~\ref{fig:loss_200b}, we also show the test accuracy and training loss versus number of epochs for the CIFAR-100 dataset with ResNet-18 and VGG-16 architectures. The same conclusion can also be drawn, where AdaSTE can quickly achieve very good performance, while it takes longer for other methods to yield high accuracy. This emphasizes the advantage of our method compared to existing approaches.

\begin{figure*}[ht]
    \centering
    \includegraphics[width=0.49\textwidth]{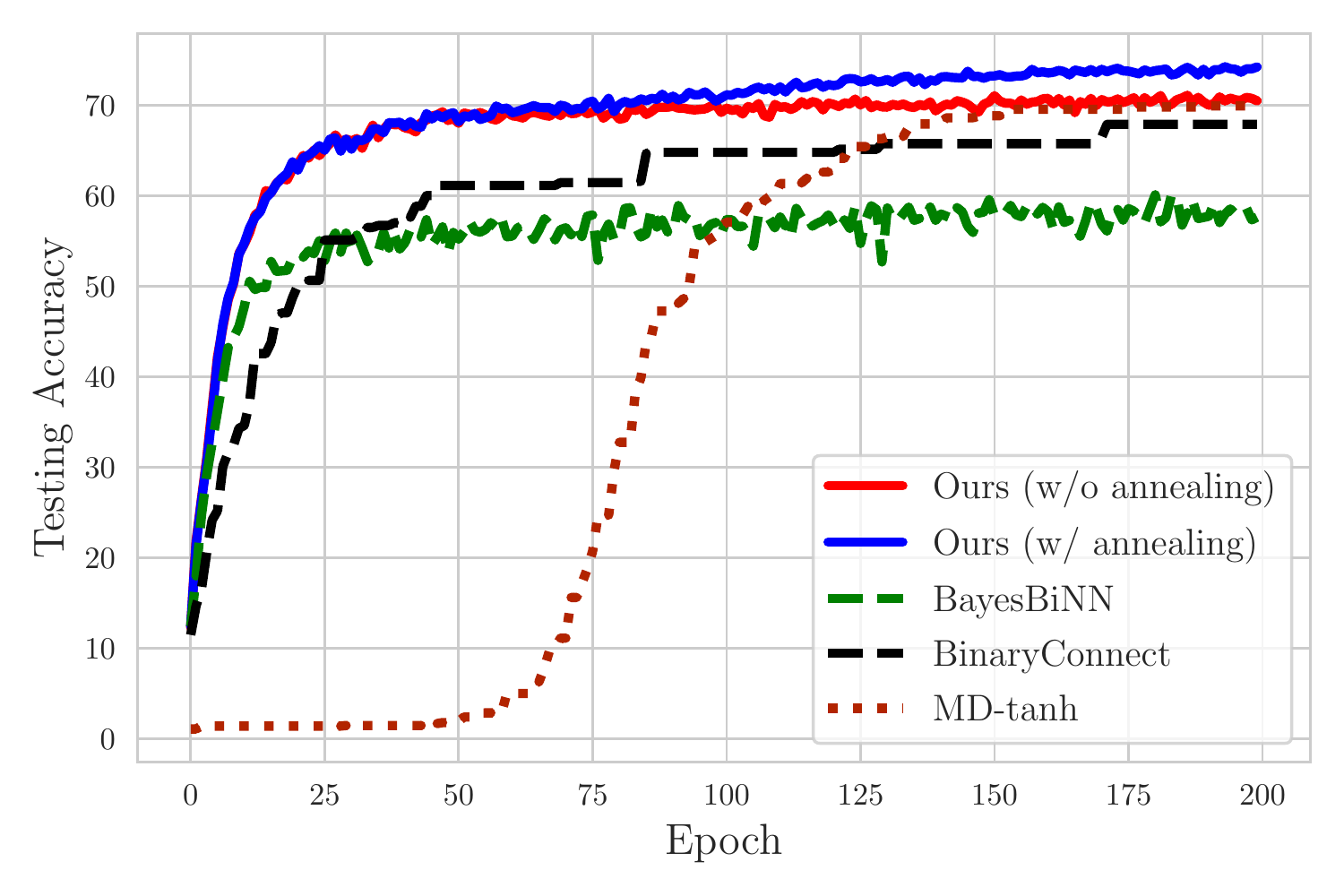}
    \includegraphics[width=0.49\textwidth]{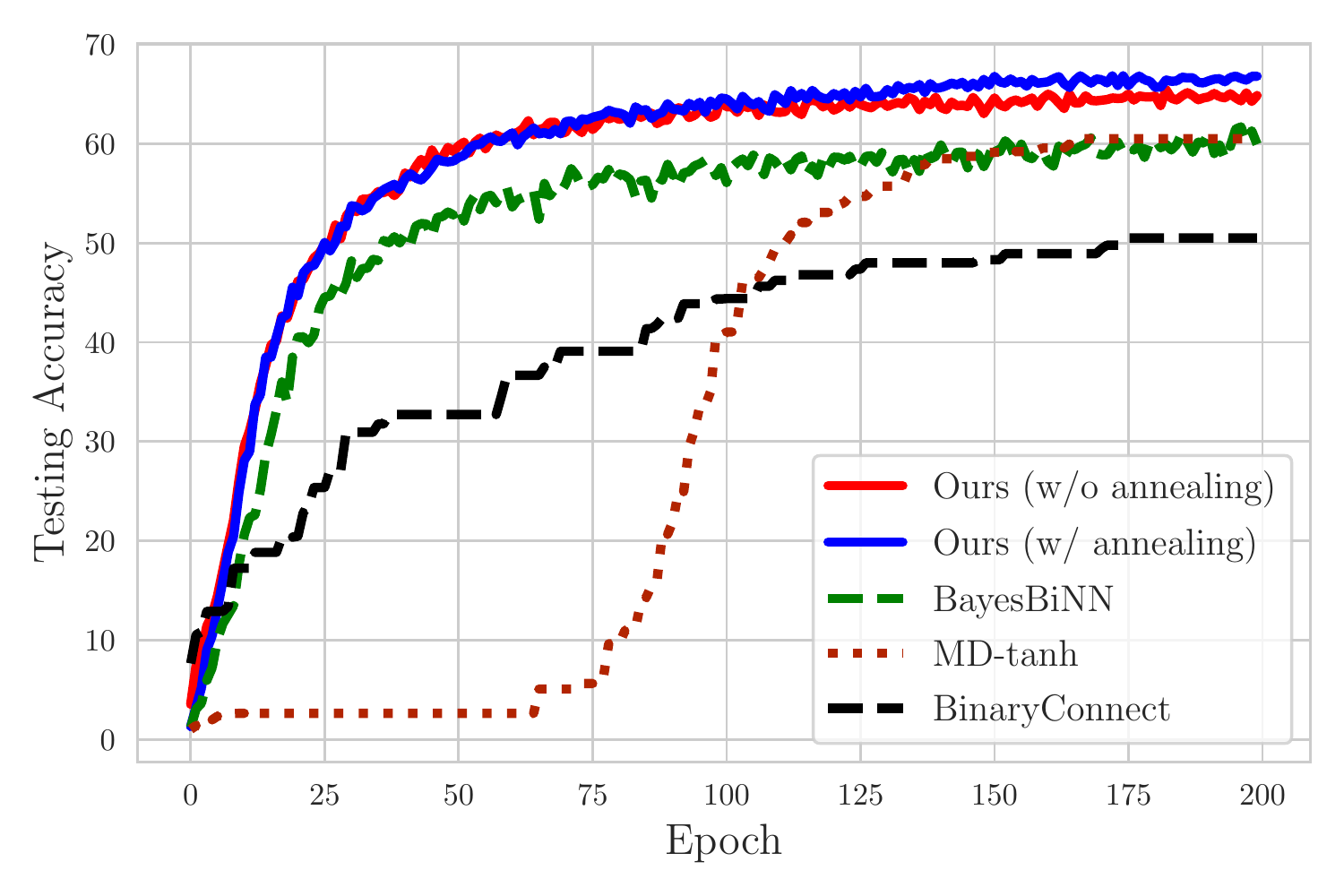}
    \caption{Testing accuracy achieved by the methods for the first 200 epochs with ResNet-18 (left) VGG16 (right) for CIFAR100 dataset. }
    \label{fig:acc_200b}
\end{figure*}

\begin{figure*}[ht]
    \centering
    \includegraphics[width=0.49\textwidth]{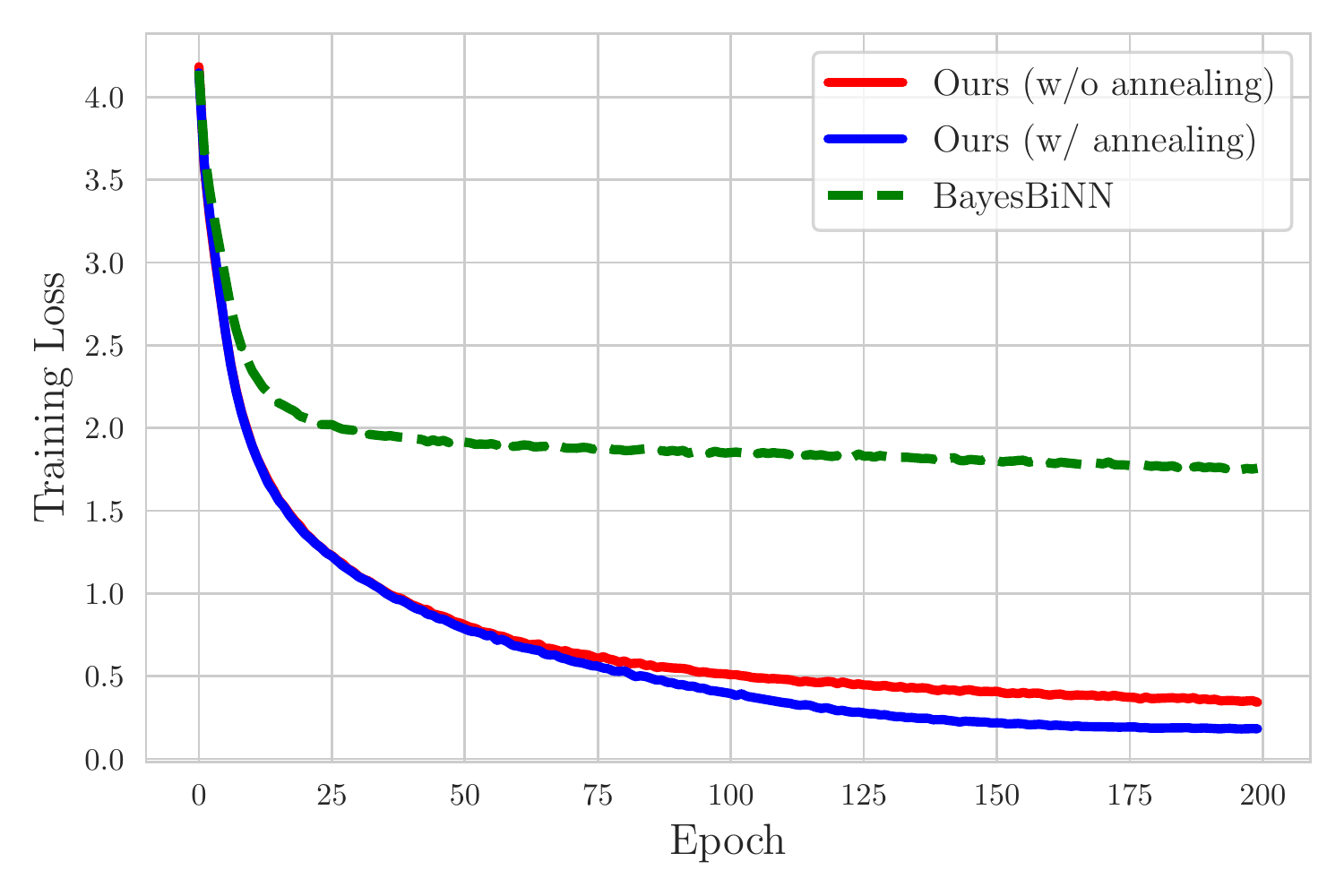}
    \includegraphics[width=0.49\textwidth]{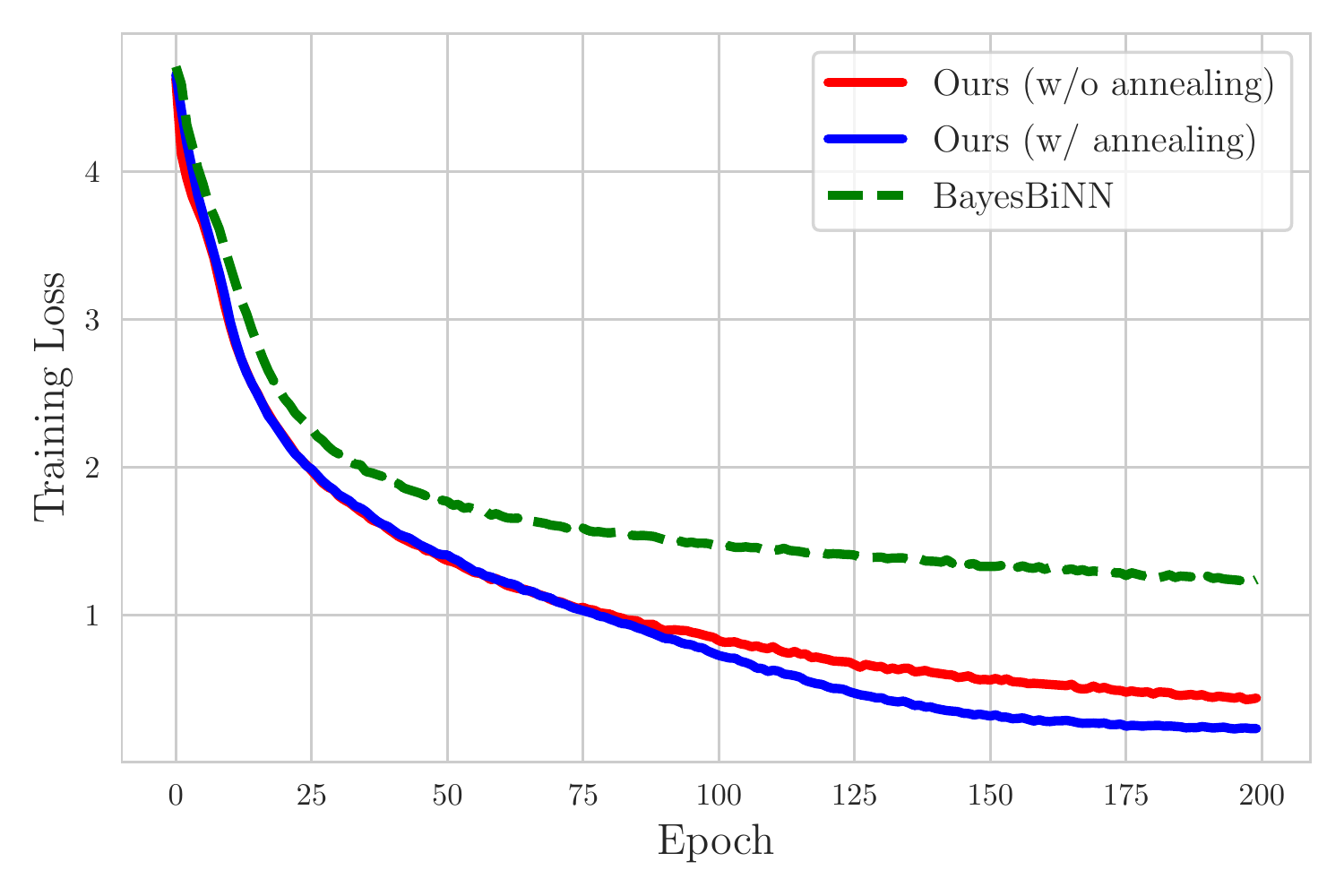}
    \caption{Training loss of the methods for the first 200 epochs with ResNet-18 (left) and VGG16 (right) on the CIFAR100 dataset. }
    \label{fig:loss_200b}
\end{figure*}

\section{Training AdaSTE and BayesBiNN for a larger number of epochs}
In Table $1$ in the main text, we report results obtained after training BayesBiNN and AdaSTE for $500$ epochs. In Fig.~\ref{fig:acc_700b}, we further show the progress of BayesBiNN and AdaSTE after training for $700$ epochs. As can be seen, the performance of both BayesBiNN and AdaSTE can still be improved, and BayesBiNN slowly approaches the performance of AdaSTE.
\begin{figure*}[htb]
    \centering
    \includegraphics[width=0.49\textwidth]{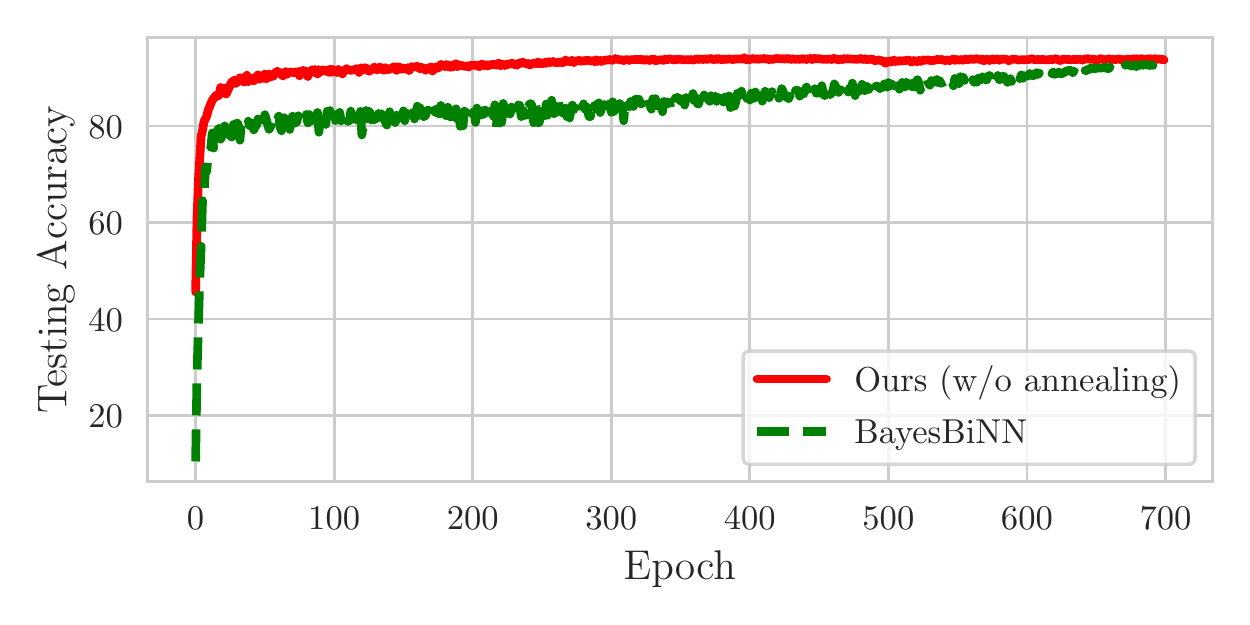}
    \includegraphics[width=0.49\textwidth]{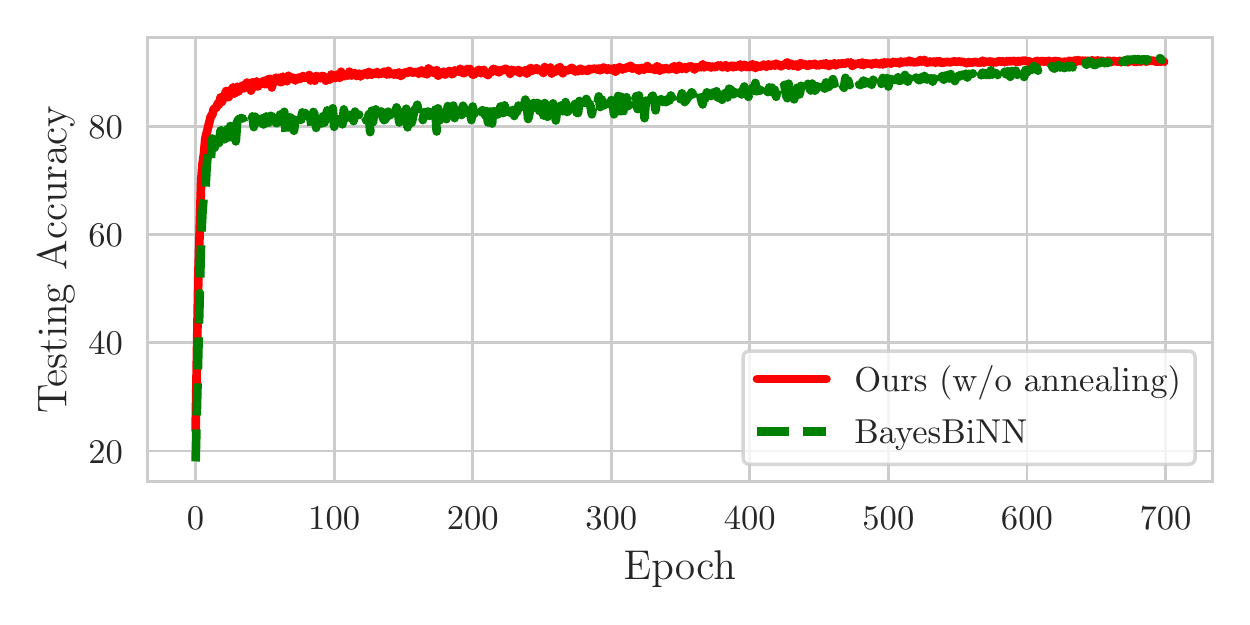}
    \includegraphics[width=0.49\textwidth]{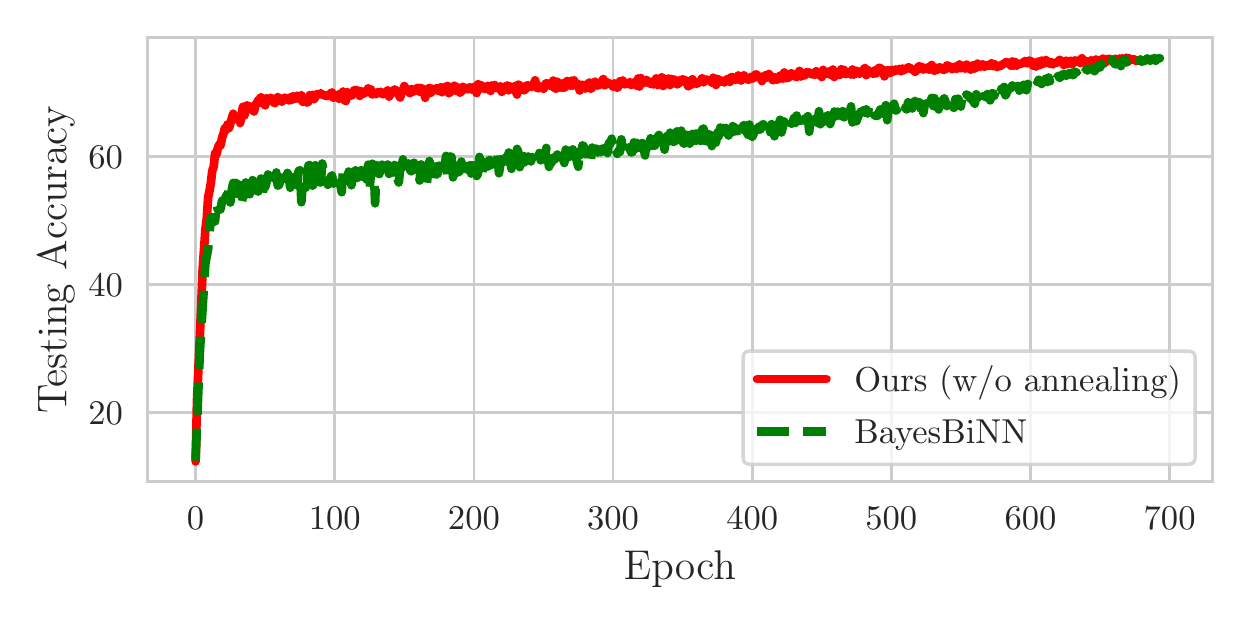}
    \includegraphics[width=0.49\textwidth]{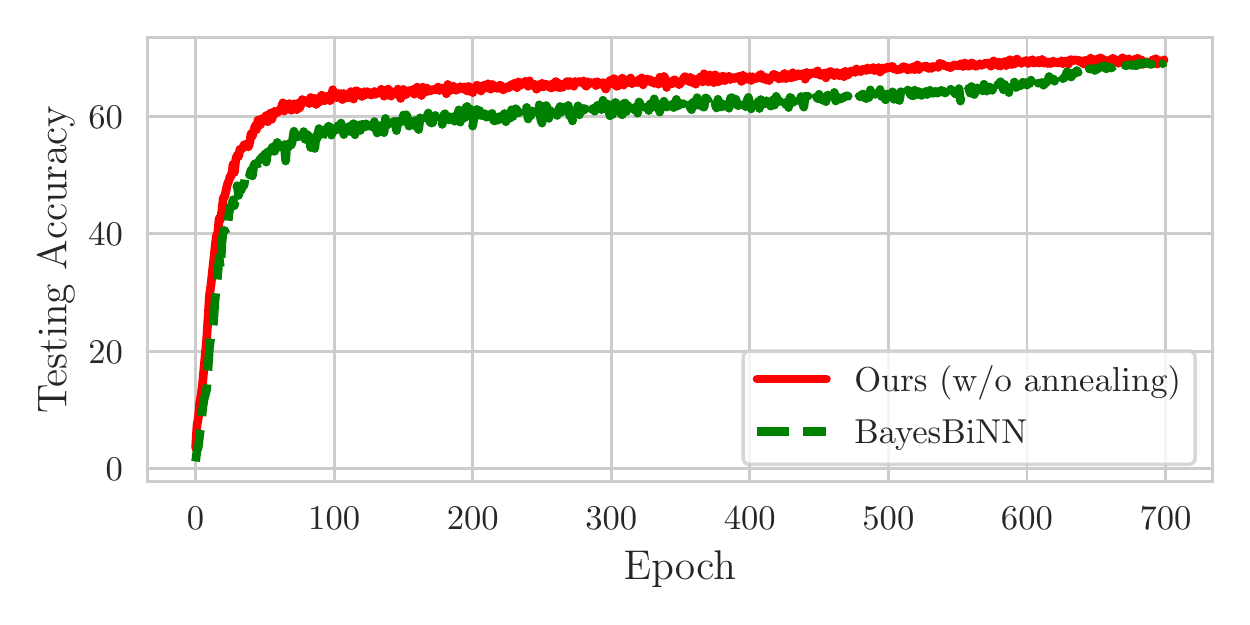}
    \caption{Testing accuracy achieved by the AdaSTE (no annealing) and BayesBiNN for 700 epochs. Top: CIFAR-10 with ResNet-18 (left) and VGG16 (right) }
    \label{fig:acc_700b}
\end{figure*}

\end{document}